\newcolumntype{?}{!{\vrule width 1pt}}
\newcommand{\A}{{A}}
\newcommand{\B}{{B}}
\renewcommand{\L}{{\Lambda}}
\newcommand{\X}{{X}}
\newcommand{\V}{{V}}
\newcommand{\U}{{U}}
\newcommand{\Q}{{Q}}
\newcommand{\T}{{T}}
\newcommand{\CC}{\mathbb{C}}
\newcommand{\RR}{\mathbb{R}}
\newcommand{\ZZ}{\mathbb{Z}}
\newcommand{\NN}{\mathbb{N}}
\newcommand{\VS}[1]{\mathrm{#1}}
\newcommand{\Id}{I}
\newcommand{\vb}[1]{{\bm{#1}}}
\crefname{hypothesis}{Hypothesis}{Hypotheses}
\title{Group-invariant tensor train networks for supervised learning\thanks{Submitted to the editors DATE.}} 
\author{Brent Sprangers\thanks{KU Leuven, Department of Computer Science, NUMA division, B-3000 Leuven, Belgium
  (\email{brent\_sprangers@outlook.com}).}
\and Nick Vannieuwenhoven\thanks{KU Leuven, Department of Computer Science, NUMA division, B-3000 Leuven, Belgium (\email{nick.vannieuwenhoven@kuleuven.be}); Leuven.AI -- KU Leuven Institute for AI, B-3000 Leuven, Belgium}.}
\newcommand{\tensor}[1]{\bm{\mathcal{#1}}}
\newcommand{\matr}[1]{#1}
\begin{document}
 
\maketitle

\begin{abstract}
Invariance has recently proven to be a powerful inductive bias in machine learning models. One such class of predictive or generative models are tensor networks. We introduce a new numerical algorithm to construct a basis of tensors that are invariant under the action of normal matrix representations of an arbitrary discrete group. This method can be up to several orders of magnitude faster than previous approaches. The group-invariant tensors are then combined into a group-invariant tensor train network, which can be used as a supervised machine learning model. We applied this model to a protein binding classification problem, taking into account problem-specific invariances, and obtained prediction accuracy in line with state-of-the-art deep learning approaches.
\end{abstract}

\begin{keywords}
	tensor networks, tensor trains, group-equivariance, group-invariance, supervised learning, representation theory
\end{keywords}

\begin{AMS} 
  15A69, 68T05, 68T09, 65F15, 20C30
\end{AMS} 

\section{Introduction}\label{sec_introduction}
The concept of \textit{equivariance} asserts that when the input to a function changes in some specific way, then the output of that function changes in a correspondingly predictable way. \textit{Invariance} is a special case wherein the function's output does not change under specific input changes.
For example, a function that takes a square picture as input and outputs whether the input contains a cat is invariant under reflections along a bisector and rotations by multiples of $\frac{\pi}{2}$ radians. A function taking square pictures as input and outputting the tightest bounding box around all cats in the input is equivariant under those same reflections and rotations.
Equivariance has long been an important property of convolutional neural networks without being noticed as such. However, since Cohen and Welling \cite{cohen2016group} introduced group-equivariant convolutional networks, research in exploiting equivariance in neural networks has boomed; an overview of works involving equivariance in machine learning can be found in \cite{awesome}. Besides providing better statistical efficiency for learning, equivariance is a powerful inductive bias, and models respecting problem-specific equivariances tend to generalize better \cite{bronstein2021geometric}. 

While neural networks are an omnipresent machine learning model, recent years have seen the popularization of another type of network: \textit{Tensor networks}. Recall that a tensor is a higher-order generalization of matrices and vectors, which after choice of basis can be identified with a multidimensional array. By \textit{order} we mean the number of indices. Tensor networks originated in the physics community as a way to efficiently represent quantum wave functions that live in a vector space whose dimension scales exponentially with the order of the tensor by means of polynomially-scaling resources \cite{Bridgeman_2017, 2011dmrg}. 
Conceptually, a tensor network is a computational graph in which the vertices represent tensors (referred to as \emph{cores}) and labeled edges between tensors indicate the \emph{contraction} of the two tensor cores along a pair of dual vector spaces that constitutes the label of the edge \cite{YL2018}. The tensor represented by a tensor network is the one obtained from evaluating the contractions described by this graph. With a properly chosen graph, a tensor network can represent an interesting subset of high-order tensors by means of low-order tensors (typically orders $2$, $3$ or $4$). 
Examples of commonly used tensor network architectures include \emph{matrix product states} \cite{2011dmrg} or \emph{tensor train networks} (TTN) \cite{oseledets}, and \emph{tree tensor networks} or the \textit{hierarchical Tucker decomposition} \cite{hackbusch,Grasedyck2010}, among others.
The focus of this paper are TTNs, which have a chain-like structure of order-$3$ tensors, as visualized in \cref{fig:tt_ml}.

Stoudenmire and Schwab \cite{stoudenmire2017supervised} showed how tensor networks can be used as discriminative machine learning models, and ever since they have been applied to machine learning tasks such as generative modelling \cite{Cheng_2019}, natural language processing \cite{pestun2017language, tangpanitanon2021explainable}, and image segmentation \cite{selvan2021segmenting}. 
Their use is similar to kernel methods. 
The input data is first mapped to an exponentially large space with a \emph{feature map} $\Phi(\vb{x})$ that consists of taking the tensor product $\otimes$ of \emph{local feature maps} ${\phi}_i: \RR \rightarrow \RR^{n_i}$ so that
\begin{equation}\label{eq:tensorfeature}
\Phi(\vb{x}) = {\phi}_1(x_1) \otimes \dots \otimes {\phi}_k(x_k) \in \RR^{n_1}\otimes \dots \otimes \RR^{n_k} \simeq \RR^{n_1 \times \cdots \times n_k}, 
\end{equation}
where $x_i$ denotes the $i$th element of $\vb{x} \in \RR^k$, as proposed in \cite{novikov2017exponential,KNO2018}.
Thereafter, the kernelized input is provided to a \textit{feature tensor} $\tensor{F}$ in $(\RR^{n_1}\otimes\dots\otimes\RR^{n_k})^* \otimes \RR^{n_{k+1}}$, which represents a linear map from the input space $\RR^{n_1}\otimes\dots\otimes\RR^{n_k}$ to $\RR^{n_{k+1}}$. To keep this approach computationally tractable, additional structure, such as a TTN, must be imposed on the feature tensor (compare \cref{fig:tn_ml_gen,fig:tt_ml}), which comprises the \textit{kernel trick} in this approach.
The output vectors can then be used in standard machine learning pipelines.

\begin{figure}[tb]
	\centering
	\caption{\footnotesize Illustration of how tensor networks can be used as machine learning models in graphical notation \cite{Bridgeman_2017}. The grey nodes denote the rank-1 input tensor $\Phi(\vb{x})$ formed by the tensor product of the local feature maps $\vb{\phi}_i(x_i)$. The contraction of $\Phi(\vb{x})$ with the \textit{feature tensor} (the rectangles with rounded corners) yields the feature vector (the single output edge at the top of both graphs). In the left figure, the feature tensor is a single order-$6$ tensor ($5$ input spaces and one output space). In the right figure, the feature tensor has a TTN structure and is represented by two matrices (the leftmost and rightmost rectangles), an order-$4$ tensor (middle rectangle), and two order-$3$ tensors.}
	\label{fig:tn_ml}
	\begin{subfigure}[t]{0.4\textwidth}
		\centering
\begin{tikzpicture}[scale=0.8]
\draw[very thick] (2.5,0) -- ++(0,2);
\foreach \i in {0,1,2,3,4} {
\draw[very thick] (1.25*\i,0) -- ++(0,1);
\draw[very thick,fill=black!10] (1.25*\i,0) circle (0.4);
}
\draw[very thick,fill=white,rounded corners] (-.4,0.6) rectangle ++(5.8,1);
\end{tikzpicture}

		\caption{\footnotesize Model with general tensor. This is intractable for large inputs.}
		\label{fig:tn_ml_gen}
	\end{subfigure}
	\hfill
	\begin{subfigure}[t]{0.4\textwidth}
		\centering
		\begin{tikzpicture}[scale=0.8]
\draw[very thick] (2.5,0) -- ++(0,2);
\draw[very thick] (0,1.1) -- ++(5,0);
\foreach \i in {0,1,2,3,4} {
\draw[very thick] (1.25*\i,0) -- ++(0,1);
\draw[very thick,fill=black!10] (1.25*\i,0) circle (0.4);
\draw[very thick,fill=white,rounded corners] (1.25*\i-.5,0.6) rectangle ++(1,1);
}
\end{tikzpicture}
		\caption{\footnotesize Model with TTN. Computation and memory scales linearly in the input.}
		\label{fig:tt_ml}
	\end{subfigure}
\end{figure}
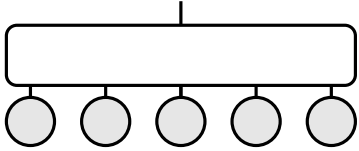

The central question of this paper is how to efficiently and algorithmically impose problem-specific equivariance and invariance conditions on tensors, generally, and TTN models, specifically. The aim is to imbue these machine learning models with a problem-specific inductive bias that allows them to more accurately generalize from the training data \cite{bronstein2021geometric}; see \cref{sec:experiments} below for an illustration of this effect.

Mathematically, the essence of the problem we study is as follows. A feature tensor $\tensor{F}$ bijectively corresponds to a \textit{multilinear map} $f$ \cite[Chapter 1]{Greub1978}. Such maps take inputs in several vector spaces and send them to another vector space. We say that $f$ is equivariant under the joint action of the invertible linear maps $M_g^{i} : \VS{W}_i \to \VS{W}_i$, $g=1,\ldots,s$, $i=1,\ldots,k+1$, if simultaneously the conditions
\begin{equation}\label{eqn_def_invariance}
  f( M_g^{1} \vb{x}_1, \ldots, M_g^{k} \vb{x}_{k} ) = M_g^{k+1} f(\vb{x}_1, \ldots, \vb{x}_k), \quad g=1,\ldots,s,
\end{equation}
hold for all $\vb{x}_i\in\VS{W}_i$.\footnote{Unless explicitly mentioned, all superscripts in this paper refer to indices. They should not be interpreted as powers.}
When all $M_g^{k+1}$ equal the identity map $\Id$ on $\VS{W}_{k+1}$, then the above conditions define invariance. These natural definitions for the multilinear map $f$ then induce a notion of equivariance and invariance on the corresponding tensor $\tensor{F}$. We will see that they both correspond to the same fundamental concept in representation theory, namely the \textit{invariance of $\tensor{F}$ under a group action} \cite[Section 1.1]{FH2004}. Because of this common group structure, one can actually simultaneously impose equivariance conditions ($M_g^{k+1} \ne \Id$) and invariance conditions ($M_g^{k+1}=\Id$) on $f$.
These invariant tensors $\tensor{F}$ span a linear subspace, called a \textit{subrepresentation} \cite[Chapter 1]{FH2004}. The main original contribution of this paper is an efficient numerical algorithm for computing an orthonormal basis for this vector space of invariant tensors when the \textit{representations} $M_g^{i}$ are normal. As a general group-invariant tensor is a linear combination of these basis vectors, optimization over the set of group-invariant tensors reduces to unconstrained optimization over the coefficients with respect to this basis. As we will see in \cref{sec:pval,sec:experiments}, this can significantly reduce the dimension of the search space, reducing memory consumption and computational time.

Prior work in the literature, discussed in \cref{sec:rel_work} below, has already investigated the idea of imposing the conditions \cref{eqn_def_invariance} on the associated tensor $\tensor{F}$. The main focus, however, has been on theoretically deriving a basis of the subrepresentation in specific cases.
In contrast to the existing literature, in this work, we propose a numerical algorithm to combine group invariance and TTNs efficiently, at least for invariances originating from finite group actions.
Our contributions are twofold. First, we introduce a new algorithm to construct group-invariant tensors that can be orders of magnitude more efficient than the previous state-of-the-art method by Finzi, Welling and Wilson \cite{finzi2021practical}, discussed in \cref{sec:rel_work}. Second, we employ our proposed method to construct group-invariant TTNs that we apply as a machine learning model for classification tasks. Our numerical experiments on a supervised learning problem regarding the binding of transcription factors to DNA sequences indicate that the proposed method is competitive with respect to state-of-the-art invariant neural networks in terms of area under the ROC curve.

The paper is organized as follows. 
\Cref{sec:background} introduces background material on representation theory of finite groups, multilinear functions, and tensors, and shows how the natural definition of equivariance of a multilinear function implies that the corresponding tensor is invariant under a group action in the sense of representation theory. \Cref{sec:tnequiv} recalls the mathematical definition of TTNs and derives our novel method to construct $G$-invariant TTNs. A key ingredient of our approach concerns the choice of the first generator of the group for which the TTN is invariant; a good heuristic is presented in \cref{sec:pval} for cyclic, dihedral, symmetric, and dicyclic groups and their products. 
Before presenting our experimental results, \cref{sec:rel_work} briefly recalls the alternative, state-of-the-art group-invariant tensor basis construction method of \cite{finzi2021practical}.
In \cref{sec:experiments}, the practical performance of our new group-invariant tensor basis construction is compared with the state of the art, and the group-invariant TTNs are compared with neural and tensor networks in two benchmark problems. In particular, we compare with a recent deep learning method on a protein binding classification problem and attain state-of-the-art performance with a group-invariant TTN that takes certain application-specific invariances into account. Finally, \cref{sec:conclusions} presents our conclusions.

\subsection*{Notation}
To lighten the notation, all positive superscripts in this paper refer to indices rather than powers, unless explicitly mentioned otherwise.
Scalars will be denoted by lowercase letters ($d$), vectors by boldface lowercase letters ($\vb{v}$), matrices by uppercase letters ($\A$), and tensors by boldface calligraphic letters ($\tensor{T}$). The tensor product is denoted by $\otimes$. Throughout the paper, we let 
$\VS{W}_i$ be real vector spaces of dimension $n_i$. 
The dual vector space of $\VS{W}_i$ is denoted by $\VS{W}_i^*$. 
We also define the following tensor product space
\[
\VS{T}_{k,d} = \VS{W}^*_{1} \otimes \cdots \otimes \VS{W}^*_k \otimes \VS{W}_{k+1}  \otimes \cdots \otimes \VS{W}_d,
\]
where $k, d \in \NN$; an empty product of dual or regular vector spaces should be interpreted as $\RR$. The space of linear maps from $\VS{V}$ to $\VS{W}$ is denoted by $L(\VS{V}; \VS{W})$.
The identity matrix and map are both denoted by $\Id$; a subscript is sometimes added to indicate the size of this matrix or the vector space on which it is the identity. 
The transpose of a linear map $L : \VS{W}_1 \to \VS{W}_0$ is $L^* : \VS{W}_0^* \to \VS{W}_1^*$. The transpose of a matrix $\A \in \CC^{m \times n}$ is denoted by $\A^\top$ and its conjugate transpose is denoted by $\A^H$.

\subsection*{Acknowledgments}
Part of the resources and services used in this work were provided by the VSC (Flemish Supercomputer Center), funded by the Research Foundation---Flanders (FWO) and the Flemish Government. This work was partially supported by KU Leuven Internal Funds grant STG/19/002.

We thank the reviewers for their detailed reading and their suggestions that spurred us to improve this paper, notably in \cref{sec_introduction,sec:background,sec:tnequiv}, and we thank editor Afonso Bandeira for handling the process.

\section{Group-invariant tensors}\label{sec:background}
This section explains how group-invariant tensors arise from multilinear maps that satisfy the condition in \cref{eqn_def_invariance}. We first recall basic representation theory of finite groups. Then, we show that imposing the conditions \cref{eqn_def_invariance} is equivalent to imposing that the map $f$ is invariant under the action of an associated finite group generated by these conditions. Thereafter, the relation between multilinear maps and tensor is recalled. Finally, we show that the group-invariance of a multilinear map $f$ translates into a natural invariance condition on the tensor representing that map.

\subsection{Representation of finite groups}
We recall the basic representation theory of finite groups that we need in this paper to define group-invariant tensors; for full details see \cite[Chapter 1]{FH2004}.

A \textit{group} $(G, *)$ is a set $G$ equipped with a binary operation
\(*: G\times G \rightarrow G\)
for which the following properties hold:
\begin{enumerate}
	\item \emph{Associativity:} $\forall a,b,c  \in G:\; (a*b)*c = a*(b*c)$;
	\item \emph{Neutral element:} $\exists e \in G,\; \forall a \in G: \; a*e = e*a = a$;
	\item \emph{Inverses:} $\forall a \in G, \;\exists b \in G:\; a*b = b*a = e$.
\end{enumerate}
\begin{example}
	The set of integers $\{0, 1\}$ with the addition modulo 2, $(\ZZ_2, +)$, forms a group. This group is often called the \emph{parity group}.
\end{example}

Somewhat related to a basis for a linear space, are the \textit{generators} of a group. If all the elements of a group $(G, *)$ can be constructed from the composition of elements of $\{g_1,\dots, g_s\} \subset G$ and their inverses, the group is said to be generated by $\{g_1,\dots, g_s\}$. This is usually denoted by $G = \langle g_1,\dots, g_s \rangle$. A group that is generated by a single element is called a \emph{cyclic group}, i.e.,  $G = \langle g_1\rangle$ is cyclic.

An $n$-dimensional \textit{group representation} $\rho$ of a group $G$ is a \textit{group-homomorphism} from $G$ to the group of linear automorphisms $\mathrm{Aut}(\VS{W})$ of an $n$-dimensional vector space $\VS{W}$ with composition of maps as binary operation \cite[Chapter XVIII, section 1]{Lang}. More concretely, after choosing an (orthonormal) basis for $\VS{W}$, each automorphism can be represented by an $n \times n$ invertible matrix in the \textit{general linear group} $\mathrm{GL}(n)$ (with matrix multiplication as operation), so that we can identify $\rho$ with 
\begin{equation*}
\rho: G \rightarrow \text{GL}(n),\quad g \mapsto \U_g,
\end{equation*}
so the homomorphism condition
\(
\U_{g*h} = \rho(g*h) = \rho(g) \cdot \rho(h) = \U_g \cdot \U_h
\)
is satisfied.

Every group has at least one representation on $\VS{W}$. Indeed, the \textit{trivial representation} \cite[Chapter XVIII, section 1]{Lang} sets $\rho(g) = \Id_n$ for every $g \in G$, which clearly satisfies the required homomorphism condition.

We call a representation \emph{normal} if, after choosing an orthonormal basis of $\VS{W}$ to identify $\mathrm{Aut}(\VS{W})$ and $\mathrm{GL}(n)$, the image of $\rho$ is contained in the set of normal matrices. A representation is orthogonal if the image is contained in the orthogonal group. Orthogonal representations, which include trivial representations, are normal. All of the eigenvalues of normal representations are roots of unity.

\begin{example}\label{eq:rep_parity}
	Consider the parity group introduced earlier. An orthogonal representation, which is also normal, of $\rho: \ZZ_2 \rightarrow \text{GL}(2, \RR)$ is given by:
	\begin{align}
	\rho(0) &= \begin{pmatrix}
	1 & 0 \\
	0 & 1 
	\end{pmatrix},
	&
	\rho(1) &= \begin{pmatrix}
	0 & 1 \\
	1 & 0
	\end{pmatrix}.
	\end{align}
\end{example}

A representation $\rho : G \to \mathrm{Aut}(\VS{W})$ induces a \textit{dual representation} $\rho^* : G \to \mathrm{Aut}(\VS{W}^*)$ \cite[Chapter XVIII, section 1]{Lang}. When representing the automorphisms with respect to the dual basis of $\VS{W}$, this dual representation satisfies 
\begin{align}\label{eqn_dual_representation}
 \rho^* : G \to \mathrm{GL}(n), \quad g\mapsto \U_g^{-\top}
\end{align}
with $\U_{g*h}^{-\top} = \rho^*(g*h) = \rho^*(g) \cdot \rho^*(h) = \U_g^{-\top} \cdot \U_h^{-\top}$. For normal representations, the dual representation is also normal. 
For orthogonal representations, the dual representation coincides with the original representation: $\rho^* = \rho$. 

\subsection{Group-invariant maps}\label{sec_sub_groupstructure}
Next, we show that for an arbitrary map $f$ satisfying \cref{eqn_def_invariance} there exists an underlying group $G$ and representations $\rho_i : G \to \mathrm{GL}(n_i)$ so that for all $j=1,\ldots,s$ we have $M_{j}^{i} = \rho_i(g_j)$, where $g_j \in G$.

Consider \cref{eqn_def_invariance} for an arbitrary map $f : \VS{W}_1\times\cdots\times\VS{W}_{k}\to\VS{W}_{k+1}$ and let $d=k+1$. 
We can identify each individual condition by a tuple of invertible linear maps 
\begin{equation}\label{eqn_tuple}
 M_g = ( M_{g}^{1},\ldots,M_g^{d} ), \quad\text{where } g=1,\ldots,s.
\end{equation}
We say that $f$ is invariant under $M_g$.
Let $G$ denote the set of \textit{all} tuples of linear maps under which $f$ is invariant. We have $M_1, \ldots, M_s \in G$ by definition, and $\Id := (\Id,\ldots,\Id) \in G$ because the corresponding invariance condition is trivial.
We observe that if $M_g = (M_g^{1},\ldots,M_g^{d}) \in G$ and $M_{h} = (M_{h}^{1},\ldots,M_{h}^{d}) \in G$, then 
\begin{align*}
 f(M_{g}^{1} M_{h}^{1} \vb{x}_1, \ldots, M_{g}^{k} M_{h}^{k} \vb{x}_{k}) 
 = M_{g}^{d} f(M_{h}^{1} \vb{x}_1, \ldots, M_{h}^{k} \vb{x}_{k})
 = M_{g}^{d} M_{h}^{d} f(\vb{x}_1, \ldots, \vb{x}_{k}).
\end{align*}
Hence $M_{g}*M_{h} := (M_{g}^{1} M_{h}^{1},\ldots,M_{g}^{d} M_{h}^{d}) \in G$.  
Finally, since $M_g^{i}$ is invertible and \cref{eqn_def_invariance} applies for all $\vb{x}_i \in \VS{W}_i$, plugging $\vb{y}_i = M_g^{i} \vb{x}_i$ into \cref{eqn_def_invariance} yields
\begin{align*}
f( (M_g^{1})^{-1} \vb{y}_1, \ldots, (M_g^{k})^{-1} \vb{y}_{k} ) 
=
f(\vb{x}_1, \ldots, \vb{x}_k)
&= (M_g^{d})^{-1} f( M_g^{1} \vb{x}_1, \ldots, M_g^{k} \vb{x}_{k} )\\
&= (M_g^{d})^{-1} f( \vb{y}_1, \ldots, \vb{y}_{k} ).
\end{align*}
Hence $M_{g}^{-1} := (  (M_g^{1})^{-1}, \ldots, (M_g^{d})^{-1} ) \in G$, and, evidently, $M_g^{-1}*M_g = M_g* M_g^{-1}=I$. This proves that $(G,*)$ is a finitely generated group.
Moreover, it is verified that $\rho_i : G \to \mathrm{Aut}(\VS{W}_i), (M^{1},\ldots,M^{d}) \mapsto M^{i}$ is a representation of $G$ on $\VS{W}_i$.

The above construction shows that, \textit{starting from any finite set of conditions as in \cref{eqn_def_invariance}, an arbitrary map $f$ will be invariant under (the chosen representations of) the whole finite group $G$ generated by $M_1,\ldots,M_s$ from \cref{eqn_tuple}.} For this reason, $f$ is called \textit{$G$-invariant} in representation theory \cite[Chapter 1]{FH2004}.
This terminology may appear to be a misnomer because the invariance is not characterized by the group, but rather by the chosen representation of the group. Nevertheless, this is the standard terminology in representation theory; see \cite[Chapter XVIII, section 1]{Lang} and \cite[Chapter 1]{FH2004}.

\subsection{From multilinear maps to tensors}
Next, we recall the correspondence between multilinear maps and tensors. The \textit{universal property} \cite[Section 1.4]{Greub1978} of the tensor product states that for every multilinear map $f : \VS{W}_1 \times\cdots\times\VS{W}_k \to \VS{W}_{k+1}$ there exists a unique linear map $F : \VS{W}_1 \otimes\cdots\otimes \VS{W}_{k} \to \VS{W}_{k+1}$ such that 
\begin{equation}\label{def_1flattening}
 f(\vb{x}_1,\ldots,\vb{x}_{k}) = F (\vb{x}_1 \otimes \cdots \otimes \vb{x}_k)
\end{equation}
for all $\vb{x}_i\in\VS{W}_i$. This linear map $F$ can be interpreted as a tensor $\tensor{F} \in \VS{W}_1^* \otimes \cdots \otimes \VS{W}_{k}^* \otimes \VS{W}_{k+1} = \mathrm{T}_{k,k+1}$ \cite[Chapter 1]{Greub1978}; note that the first $k$ spaces are dual vector spaces because the linear map expects inputs from those spaces. It can be shown, by using the multilinearity of $f$ and the tensor product, that this tensor is given explicitly by
\[
 \tensor{F} 
 = \sum_{j_1=1}^{n_1} \cdots \sum_{j_{k}=1}^{n_{k}} \vb{e}_{j_1}^{1*}\otimes\cdots\otimes\vb{e}_{j_{k}}^{k*} \otimes f(\vb{e}_{j_1}^{1}, \ldots, \vb{e}_{j_{k}}^{k}),
\]
where $\vb{e}_1^{i}, \ldots, \vb{e}_{n_i}^{i}$ forms an orthonormal basis of $\VS{W}_i$, and $\vb{e}^{i*}_j$ denotes the dual basis vector of $\vb{e}^{i}_j$. Like a matrix that represents a linear map, the foregoing expression simply encodes that the tensor $\tensor{F} \in \mathrm{T}_{k,k+1}$ that represents a multilinear map $f$ is given in coordinates by the evaluation of $f$ on the basis vectors $\vb{e}_{j_1}^{1}\otimes\cdots\otimes\vb{e}_{j_k}^{k}$. Note that this uses that the tensor product of the bases of the individual vector spaces forms a basis for the tensor product space \cite{Greub1978}.

Since the tensor product $\VS{W}_{k+1}\otimes\cdots\otimes\VS{W}_d$ is itself a vector space, in general a $(k,d)$-tensor $\tensor{T}$ is defined to be an element of $\VS{T}_{k,d} = \VS{W}_{1}^*\otimes\cdots\otimes\VS{W}_k^* \otimes \VS{W}_{k+1}\otimes\cdots\otimes\VS{W}_d$. 
Such a tensor can always be written as
\begin{equation*}
\tensor{T} = \sum_{j_{1}=1}^{n_{1}} \cdots \sum_{j_d=1}^{n_d}\mathcal{T}_{j_{1},\dots,j_d}\; \vb{e}_{j_{1}}^{1*} \otimes \cdots \otimes \vb{e}_{j_k}^{k*} \otimes \vb{e}_{j_{k+1}}^{k+1} \otimes \cdots \otimes \vb{e}_{j_d}^{d},
\end{equation*}
where $\vb{e}_j^{i} \in \VS{W}_i$ and $\vb{e}_j^{i*} \in \VS{W}_i^*$ are as above.

Let the tensor $\tensor{T} \in \VS{T}_{k,d}$, and assume we have linear maps $U_i^* : \VS{W}_i^* \to \VS{V}_i^*$, $i=1,\ldots,k$, and linear maps $U_i : \VS{W}_i \to \VS{V}_i$, $i=k+1,\ldots,d$. The \textit{tensor product} of these linear maps $U_{1}^*\otimes\dots\otimes U_k^* \otimes U_{k+1} \otimes\dots\otimes U_d$ is a linear map that lives in 
\[
 L( \VS{W}_{1}^*\otimes\dots\otimes\VS{W}_k^* \otimes \VS{W}_{k+1}\otimes\dots\otimes\VS{W}_d; \VS{V}_{1}^*\otimes\dots\otimes\VS{V}_k^* \otimes \VS{V}_{k+1}\otimes\dots\otimes\VS{V}_d ),
\]
where $L(\VS{W}; \VS{V})$ is the vector space of linear maps from $\VS{W}$ to $\VS{V}$; see Greub \cite[Chapter 1]{Greub1978}. This map is defined as 
\begin{multline*}
(U_{1}^*\otimes\cdots\otimes U_k^* \otimes U_{k+1} \otimes\dots\otimes U_d)(\tensor{T})\\
:= \sum_{j_{1}=1}^{n_{1}} \cdots \sum_{j_d=1}^{n_d} \mathcal{T}_{j_{1},\dots,j_d} (U_{1}^{*} \vb{e}_{j_{1}}^{1*} ) \otimes\dots\otimes (U_k^{*} \vb{e}_{j_k}^{k*} ) \otimes (U_{k+1} \vb{e}_{j_{k+1}}^{k+1}) \otimes\dots\otimes (U_d \vb{e}_{j_d}^{d}).
\end{multline*}
For brevity, it is common in numerical multilinear algebra to write
\begin{align}\label{eqn_multilinear_multiplication}
(U_{1}^*,\ldots,U_k^*, U_{k+1},\dots,U_d) \cdot\tensor{T} := (U_{1}^*\otimes\cdots\otimes U_k^* \otimes U_{k+1} \otimes\dots\otimes U_d)(\tensor{T}).
\end{align}
Evaluating this linear map at $\tensor{T}$ is often referred to as a \textit{multilinear multiplication}.

\subsection{Group-invariant multilinear maps and tensors}
The previous subsection recalled that any multilinear map $f$ corresponds to a tensor $\tensor{F}$. The following result shows how the invariance conditions \cref{eqn_def_invariance} manifest themselves in the tensor $\tensor{F}$.

\begin{proposition}\label{prop_invariance}
Let $f : \VS{W}_{1}\times\cdots\times\VS{W}_k \to \VS{W}_{k+1}$ be a multilinear map, $\tensor{F}\in\VS{W}_1^*\otimes\cdots\otimes\VS{W}_k^*\otimes\VS{W}_{k+1}$ the associated tensor, $G=\langle g_1,\ldots,g_s\rangle$ a finitely-generated group, and $\rho_i : G \to \mathrm{Aut}(\VS{W}_i)$ representations. Then, $f$ is $G$-invariant if and only if 
\begin{equation}\label{eqn_tensorf_invar}
 \tensor{F} = ( \rho_1^*(g),\ldots,\rho_k^*(g), \rho_{k+1}(g) ) \cdot \tensor{F}, \quad \forall g \in \{g_1,\ldots,g_s\},
\end{equation}
where $\rho^*(g) = \rho^{-\top}(g)$ is the dual representation.
\end{proposition}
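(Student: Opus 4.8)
The plan is to exploit the bijective correspondence between multilinear maps and tensors furnished by the universal property \cref{def_1flattening}: since $f \leftrightarrow \tensor{F}$ is a bijection, the tensor identity \cref{eqn_tensorf_invar} for a fixed generator $g$ will hold if and only if the multilinear map represented by the right-hand side $(\rho_1^*(g),\ldots,\rho_k^*(g),\rho_{k+1}(g))\cdot\tensor{F}$ equals $f$. So the heart of the argument is to identify which multilinear map this modified tensor represents, and then to rewrite the resulting functional equation in the form \cref{eqn_def_invariance}.

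For the key computation, I would expand $\tensor{F}$ in the coordinate form recalled above, namely $\tensor{F} = \sum_{j_1,\ldots,j_k} \vb{e}_{j_1}^{1*}\otimes\cdots\otimes\vb{e}_{j_k}^{k*}\otimes f(\vb{e}_{j_1}^1,\ldots,\vb{e}_{j_k}^k)$, apply the multilinear multiplication defined in \cref{eqn_multilinear_multiplication}, and then contract the first $k$ (dual) slots against arbitrary inputs $\vb{x}_1,\ldots,\vb{x}_k$ to recover the associated multilinear map. The one fact I need is the defining action of the dual representation on a functional: because $\rho_i^*(g) = \rho_i^{-\top}(g)$, one has $(\rho_i^*(g)\vb{e}_{j}^{i*})(\vb{x}_i) = \vb{e}_{j}^{i*}(\rho_i(g)^{-1}\vb{x}_i)$. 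Feeding this into the contracted sum and using multilinearity of $f$, the scalar coefficients reassemble into $f$ evaluated at the transformed inputs, so the multilinear map represented by $(\rho_1^*(g),\ldots,\rho_{k+1}(g))\cdot\tensor{F}$ is exactly $(\vb{x}_1,\ldots,\vb{x}_k)\mapsto \rho_{k+1}(g)\, f(\rho_1(g)^{-1}\vb{x}_1,\ldots,\rho_k(g)^{-1}\vb{x}_k)$.

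It then remains to equate this with $f$ and match it to \cref{eqn_def_invariance}. By the bijection, \cref{eqn_tensorf_invar} for the generator $g$ is equivalent to $f(\vb{x}_1,\ldots,\vb{x}_k) = \rho_{k+1}(g)\, f(\rho_1(g)^{-1}\vb{x}_1,\ldots,\rho_k(g)^{-1}\vb{x}_k)$ for all inputs; since each $\rho_i(g)$ is invertible, the substitution $\vb{y}_i = \rho_i(g)^{-1}\vb{x}_i$ is a bijection of each $\VS{W}_i$ and turns this into $f(\rho_1(g)\vb{y}_1,\ldots,\rho_k(g)\vb{y}_k) = \rho_{k+1}(g)\, f(\vb{y}_1,\ldots,\vb{y}_k)$, which is precisely condition \cref{eqn_def_invariance} with $M_g^i = \rho_i(g)$. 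Running this equivalence over every generator $g\in\{g_1,\ldots,g_s\}$ gives the stated ``if and only if.'' I expect the only delicate point to be the bookkeeping in the key computation --- correctly tracking the contragredient (inverse-transpose) action on the dual factors so that the inverses $\rho_i(g)^{-1}$ land on the inputs rather than on the output; once the change of variables is applied this recovers \cref{eqn_def_invariance} without the inverses, matching the definition of $G$-invariance.
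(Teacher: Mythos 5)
Your proposal is correct and is essentially the paper's own argument run in the opposite direction: the paper starts from $G$-invariance, derives the operator identity $F = \rho_{k+1}(g)\circ F\circ(\rho_1(g)\otimes\cdots\otimes\rho_k(g))^{-1}$, and then unpacks it in coordinates using exactly the transpose relation $(\rho_i^{-\top}(g)\vb{e}_j^{i*})(\vb{x}_i)=\vb{e}_j^{i*}(\rho_i^{-1}(g)\vb{x}_i)$ and the flattening identification, whereas you first identify the multilinear map represented by $(\rho_1^*(g),\ldots,\rho_k^*(g),\rho_{k+1}(g))\cdot\tensor{F}$ and then change variables --- the same computation with the same ingredients. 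The only point to make explicit, as the paper's first line does by invoking \cref{sec_sub_groupstructure}, is that invariance under the generators extends to all of $G$, so that ``\cref{eqn_def_invariance} holds for $g_1,\ldots,g_s$'' may be identified with ``$f$ is $G$-invariant.''
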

\begin{proof} 
It follows from \cref{sec_sub_groupstructure} that if \cref{eqn_tensorf_invar} holds for the generators of $G$, then \cref{eqn_tensorf_invar} holds for all group elements $g \in G$. The converse is obvious. 
Hence, in the remainder of the proof, we let $g\in G$ be arbitrary.

From \cref{def_1flattening} we find on the one hand that
\begin{align*}
 f( \rho_1(g) \vb{x}_1, \ldots, \rho_k(g) \vb{x}_k )
 &= F \left( ( \rho_1(g) \vb{x}_1 ) \otimes \cdots\otimes (\rho_k(g) \vb{x}_k) \right)\\
 &= \left( F \circ (\rho_1(g) \otimes \cdots\otimes \rho_k(g)) \right)\left(\vb{x}_1\otimes\cdots\otimes \vb{x}_k \right),
\end{align*}
while on the other hand,
\[
 f( \rho_1(g) \vb{x}_1, \ldots, \rho_k(g) \vb{x}_k ) 
 = \rho_{k+1}(g) f(\vb{x}_1,\ldots,\vb{x}_k) 
 = \left( \rho_{k+1}(g) \circ F \right) (\vb{x}_1\otimes\cdots\otimes \vb{x}_k).
\]
As $F$ is a linear map, both equations hold for arbitrary $\vb{x}_i$, and $\{\vb{e}_{j_1}^{1}\otimes\cdots\otimes\vb{e}_{j_k}^{k}\}_{j_1,\ldots,j_k}$ is a basis of the domain of $F$, we obtain the following equality of linear maps
\[
 F = \rho_{k+1}(g) \circ F \circ ( \rho_1(g) \otimes\cdots\otimes \rho_k(g) )^{-1}.
\]
From this it follows that 
\begin{align} \label{eqn_invar_proof1}
 &F(\vb{x}_1\otimes\cdots\otimes\vb{x}_k)\\
\nonumber &= \rho_{k+1}(g) \circ F (\rho_1^{-1}(g) \vb{x}_1 \otimes\cdots\otimes \rho_k(g)^{-1} \vb{x}_k ) \\
\nonumber &=
 \rho_{k+1}(g) \sum_{j_1=1}^{n_1} \cdots \sum_{j_{k}=1}^{n_{k}} ( \vb{e}_{j_1}^{1*} \rho_1^{-1}(g) \vb{x}_1)   \otimes\cdots\otimes (\vb{e}_{j_{k}}^{k*} \rho_k^{-1}(g) \vb{x}_k ) \otimes f(\vb{e}_{j_1}^{1}, \ldots, \vb{e}_{j_{k}}^{k}).
\end{align}
Note that the last step used a standard identification \cite[Chapter 1]{Greub1978}, namely
\[
\tensor{F} \in \VS{W}_1^* \otimes\cdots\otimes \VS{W}_k^* \otimes \VS{W}_{k+1}
\overset{\cdot_{(k+1)}}{\simeq} L(\VS{W}_1 \otimes\cdots\otimes \VS{W}_k; \VS{W}_{k+1}) \ni \tensor{F}_{(k+1)} := F;
\]
in numerical multilinear algebra identifying $\tensor{F}$ with $\tensor{F}_{(k+1)}:=F$ is called a \textit{flattening} or \textit{matricization}.
Recall that the transpose of a linear map is defined to satisfy
\[
\vb{e}_{j}^{i*} (\rho_i^{-1}(g) \vb{x}_i) 
= (\rho_i^{-\top}(g) \vb{e}_j^{i*})(\vb{x}_i). 
\]
Writing $f(\vb{e}_{j_1}^{1},\ldots,\vb{e}_{j_k}^{k}) = \sum_{j_{k+1}=1}^{n_{k+1}} \tensor{F}_{j_1,\ldots,j_{k+1}} \vb{e}_{j_{k+1}}^{k+1}$ and plugging it into \cref{eqn_invar_proof1} yields
\begin{align*}
 &\tensor{F}_{(k+1)}(\vb{x}_1\otimes\cdots\otimes\vb{x}_k)\\
&= \sum_{j_1=1}^{n_1} \cdots \sum_{j_{k+1}=1}^{n_{k+1}} \tensor{F}_{j_1,\ldots,j_{k+1}} ( \rho_1^{*}(g) \vb{e}_{j_1}^{1*} \vb{x}_1)  \otimes\cdots\otimes (\rho_k^{*}(g) \vb{e}_{j_{k}}^{k*} \vb{x}_k) \otimes (\rho_{k+1}(g) \vb{e}_{j_{k+1}}^{k+1})\\
&= \left( (\rho_1^*(g), \ldots, \rho_k^*(g), \rho_{k+1}(g)) \cdot \tensor{F} \right)_{(k+1)}(\vb{x}_1 \otimes \cdots \otimes \vb{x}_k).
\end{align*}
As this equality of linear maps holds for arbitrary $\vb{x}_i$, and $\tensor{F} \simeq \tensor{F}_{(k+1)}$, the proof is concluded.
\end{proof}

\begin{remark}
The concept of $G$-invariance in representation theory thus corresponds to equivariance in the machine learning community. The machine learning invariance concept corresponds to $G$-invariance where $M^{k+1}$ is the identity map $\Id$ (the trivial representation) for all $(M^{1},\ldots,M^{k+1})\in G$. 
\end{remark}

While we will only rely on \cref{prop_invariance} for studying the group-invariance of TTNs, the next definition is the natural generalization to arbitrary $(k,d)$-tensors.

\begin{definition}[$G$-invariant tensor] \label{def:def_invariance}
		Let $G$ be a finite group with representations $\rho_i : G \mapsto \mathrm{Aut}(\VS{W}_i)$. A tensor $\tensor{T} \in \VS{T}_{k,d}$ is called \emph{$G$-invariant} if
		\begin{align}\label{eq:eqn_g_invariant}
		\tensor{T} = (\rho^*_{1}(g), \dots, \rho^*_{k}(g), \rho_{k+1}(g), \dots, \rho_{d}(g)) \cdot \tensor{T}, \quad\forall g \in G
		\end{align}
or, equivalently,
		\[
		(\rho_{1}(g)^\top, \dots, \rho_{k}(g)^\top,\Id,\ldots,\Id) \cdot \tensor{T} 
		=
		( \Id, \dots, \Id, \rho_{k+1}(g), \dots, \rho_d(g)) \cdot \tensor{T}, \quad\forall g\in G.
		\]
\end{definition}

The two characterizations in \cref{def:def_invariance} are equivalent, which can be seen by multilinearly multiplying both sides of \cref{eq:eqn_g_invariant} with $(\rho_{1}(g)^\top, \dots, \rho_{k}(g)^\top, \Id,\ldots,\Id)$ and recalling that $\rho^*_i(g) = \rho^{-\top}_i(g)$.

\section{Efficiently constructing group-invariant TTNs}\label{sec:tnequiv}
Here, we first recall the definition of TTNs, and then introduce a new efficient method to find a basis of group-invariant tensors. 

\subsection{TTNs}\label{sec_tt_explain}
Let $k+1 = d$ for brevity.
By introducing $k-1$ auxiliary vector spaces $\VS{B}_i$, the TTN we consider represents a tensor $\tensor{F}$ in $\VS{T}_{k,d} = \VS{W}_1^* \otimes\dots\otimes \VS{W}_{k}^* \otimes \VS{W}_{d}$ using a sequence of low-order tensors (see also \cref{fig:tt_ml}) as follows:
\[
(\matr{T}^{1}, \tensor{T}^{2},\dots,\tensor{T}^{k-1},\matr{T}^{k}).
\]
Herein, at the left and right end, we have matrices $\matr{T}^{1} \in \VS{W}_1^* \otimes \VS{B}_1$ and $\matr{T}^{k} \in \VS{B}_{k-1} \otimes \VS{W}_k^*$, respectively. The \textit{output core tensor} $\tensor{T}^{\ell} \in \VS{W}_\ell^* \otimes \VS{B}_{\ell-1}^* \otimes \VS{B}_{\ell}^* \otimes \VS{W}_{k+1}$ is assumed to appear at position $1 < \ell < k$ in the TTN. This is usually taken to be the core in the middle as in \cref{fig:tt_ml}. The other core tensors are $\tensor{T}^{i} \in \VS{W}_i^* \otimes \VS{B}_{i-1}^* \otimes \VS{B}_i$ on the left for $1 < i < \ell$, and $\tensor{T}^{i} \in \VS{W}_i^* \otimes \VS{B}_{i-1}\otimes\VS{B}_i^*$ on the right for $\ell< i < k$. 

The tensor $\tensor{F}$ represented by the above TTN can be understood as explained next. 
We can express 
\(
 \tensor{F} = \sum_{j_{d}=1}^{n_{d}} \tensor{F}^{j_{d}} \otimes \vb{e}_{j_{d}}^{d},
\)
where $(\vb{e}_1^{d}, \dots,\vb{e}_{n_{d}}^{d})$ is a basis of the $n_{d}$-dimensional vector space $\VS{W}_{d}$; hence, $\tensor{F}^{j_{d}}$ is the $j_{d}$th ``slice'' of $\tensor{F}$ in the output factor $\VS{W}_{d}$. 
The tensors $\tensor{F}^{j_{d}}$ live in $\VS{W}_1^*\otimes\dots\otimes\VS{W}_k^* \simeq L(\VS{W}_1\otimes\dots\otimes\VS{W}_k; \RR)$, and are, hence, defined through their action on rank-$1$ tensors.
Let $(\vb{e}_{1}^{i*},\ldots,\vb{e}_{n_i}^{i*})$ be a basis of $\VS{W}_i^*$ as before.
The action of $\tensor{F}^{j_{d}}$ on $\vb{x}_1\otimes\dots\otimes\vb{x}_k$ is defined to be:
\begin{equation}\label{eqn_def_tt}
\tensor{F}^{j_{d}}(\vb{x}_1\otimes\dots\otimes\vb{x}_k) 
= 
\tensor{T}_{\vb{x}_{\ell}}^{\ell,j_{d}}\, 
\Bigl( 
\underbrace{\tensor{T}_{\vb{x}_{\ell-1}}^{\ell-1} \cdots \tensor{T}_{\vb{x}_2}^{2} \matr{T}^{1} \vb{x}_1}_{\text{a vector in }\VS{B}_{\ell-1}},\;
\underbrace{\tensor{T}_{\vb{x}_{\ell+1}}^{\ell+1} \cdots \tensor{T}_{\vb{x}_{k-1}}^{k-1} \matr{T}^{k} \vb{x}_k}_{\text{a vector in }\VS{B}_{\ell}} \Bigr),
\end{equation}
where, for $1 \le p \ne \ell \le k$,
\begin{align*}
\tensor{T}^{p}_{\vb{x}_p} &:= \sum_{j_p=1}^{n_p} \vb{e}_{j_p}^{p*}(\vb{x}_p) \cdot \matr{T}_{j_p}^{p} 
&&\quad\text{if } \tensor{T}^{p} = \sum_{j_p=1}^{n_p} \vb{e}_{j_p}^{p*} \otimes \matr{T}_{j_p}^{p}, \text{ and}\\
\tensor{T}^{\ell,j_{d}}_{\vb{x}_\ell} &:= \sum_{j_\ell=1}^{n_\ell} \vb{e}_{j_\ell}^{\ell *}(\vb{x}_\ell) \cdot \matr{T}_{j_\ell,j_{d}}^{\ell} 
&&\quad\text{if } \tensor{T}^{\ell} = \sum_{j_\ell=1}^{n_\ell}\sum_{j_{d}=1}^{n_{d}} \vb{e}_{j_\ell}^{\ell *} \otimes \matr{T}_{j_\ell,j_{d}}^{(\ell)} \otimes \vb{e}_{j_{d}}^{d}.
\end{align*}
Note that $\tensor{T}_{\vb{x}_\ell}^{\ell,j_{d}} \in \VS{B}_{\ell-1}\otimes\VS{B}_\ell$, so it can also be interpreted as a matrix in $\VS{B}_{\ell-1}^* \otimes \VS{B}_\ell$ satisfying $\tensor{T}_{\vb{x}_\ell}^{(\ell),j_{d}}(\vb{v}, \vb{w}) = \vb{w}^* \tensor{A}_{\vb{x}_\ell}^{\ell,j_{d+1}} \vb{v},$ where $\vb{w}^*$ is the dual vector of $\vb{w}$.
In coordinates, $\tensor{T}_{\vb{x}_p}^{p}$ is obtained by taking a linear combination with the coordinates of $\vb{x}_\ell$ of the appropriate ``slices'' of the tensor $\tensor{T}^{p}$. For more details see Oseledets \cite{oseledets}.

The dimensions of the auxiliary vector spaces $\VS{B}_i$ are called the \textit{bond dimensions}. Hence, $\tensor{F}$ above is said to have bond dimensions $(r_1,\dots,r_{d-1})$ where $r_i = \dim \VS{B}_i$. Every tensor can be represented exactly in TTN format with sufficiently large bond dimensions \cite{Verstraete_2004}. Note that the minimal bond dimension with which a tensor $\tensor{F}$ can be represented is called the TTN rank \cite{YL2018}. We do not assume minimality of the bond dimensions, hence we consistently refer to $r_i$ as a bond dimension rather than a rank.

\subsection{$G$-invariant TTNs} \label{sec_equivariant_tt} 
We introduce an efficient scheme to construct TTNs that represent tensors invariant under a \textit{finite} discrete group $G$ with normal representation matrices. The construction of such networks is simplified by the next result.

\begin{lemma}[Singh, Pfeifer, and Vidal \cite{singh2010tensor}] \label{lem_g_equiv_tt}
Let $G$ be a finite discrete group. The tensor represented by a tensor network with $G$-invariant cores is $G$-invariant. 
\end{lemma}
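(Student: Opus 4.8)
The plan is to reduce the statement to one elementary fact about contracting two $G$-invariant tensors along a single shared edge, and then to iterate that fact over all the internal edges of the network. First I would make the hypothesis precise. A tensor network assigns to every open (external) leg a representation $\rho_i$ of $G$ and to every internal edge $e$ a representation $\sigma_e$, where the two cores meeting at $e$ carry the mutually dual legs $\VS{B}_e$ and $\VS{B}_e^*$, transforming by $\sigma_e$ and $\sigma_e^* = \sigma_e^{-\top}$ respectively. Saying that the cores are \emph{$G$-invariant} then means that each core tensor satisfies \cref{def:def_invariance} with respect to the representations attached to \emph{all} of its legs, external and internal alike; note that for a TTN the dual legs $\VS{B}_{i}^*$ of one core are matched with the primal legs $\VS{B}_{i}$ of its neighbor, exactly as required.

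The heart of the argument is the single-contraction step. Let $\tensor{A}$ and $\tensor{C}$ be $G$-invariant tensors sharing an edge, so that $\tensor{A}$ has a leg in some bond space $\VS{B}$ transforming by $\sigma$ and $\tensor{C}$ has the dual leg in $\VS{B}^*$ transforming by $\sigma^*$. I would show that their contraction along $\VS{B}$ is again $G$-invariant, with the remaining legs carrying the representations inherited from $\tensor{A}$ and $\tensor{C}$. The key algebraic fact is that the natural pairing underlying the contraction is $G$-invariant: by the definition \cref{eqn_dual_representation} of the dual representation, $\langle \sigma(g)\vb{v}, \sigma^*(g)\vb{\beta}\rangle = \langle \vb{v}, \vb{\beta}\rangle$ for all $\vb{v}\in\VS{B}$, $\vb{\beta}\in\VS{B}^*$, and $g\in G$. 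Consequently, applying the external representations to the contraction and then contracting equals first applying all representations, external and the matched $\sigma,\sigma^*$ on the bond, to $\tensor{A}$ and $\tensor{C}$ and then contracting: the bond transformations $\sigma(g)$ and $\sigma^*(g)$ annihilate one another under the pairing, while the $G$-invariance of $\tensor{A}$ and $\tensor{C}$ leaves them unchanged. Hence the contracted tensor is fixed by the external representations, i.e.\ it is $G$-invariant.

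With this in hand I would conclude by induction on the number of internal edges. Contracting the edges one at a time, each intermediate tensor is $G$-invariant by the single-contraction step, and since the value of a tensor network is independent of the order in which its edges are contracted, after all internal edges have been removed we are left with the tensor $\tensor{F}$ on the external legs, invariant under the external representations $\rho_i^*$ and $\rho_{k+1}$. Specialized to the TTN of \cref{sec_tt_explain}, whose internal edges are precisely the bond spaces $\VS{B}_i$, this yields the claim.

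I expect the main obstacle to be organizational rather than deep: carefully tracking which legs are primal versus dual and verifying that the representation on one side of every internal edge is genuinely the dual of the representation on the other side, since it is exactly this matching that makes the bond transformations cancel under the pairing. Once the single-contraction identity is established, the induction is routine and relies only on the well-definedness (associativity and commutativity) of tensor-network contraction.
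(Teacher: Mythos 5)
Your proposal is correct. One point of comparison to flag at the outset: the paper does not actually prove \cref{lem_g_equiv_tt} at all --- it is quoted as a known result of Singh, Pfeifer, and Vidal \cite{singh2010tensor}, so there is no internal proof to measure your argument against. What you have written is, in essence, the standard argument behind the cited result, and it is sound: the precise hypothesis (every leg, external and internal, carries a representation, with the two sides of each bond carrying mutually dual representations $\sigma_e$ and $\sigma_e^* = \sigma_e^{-\top}$, exactly as in the paper's TTN where $\VS{B}_i^*$ on one core matches $\VS{B}_i$ on its neighbor) is the right one to isolate, and the single-contraction step works because $\langle \sigma(g)\vb{v}, \sigma^*(g)\vb{\beta}\rangle = \vb{\beta}^\top \sigma(g)^{-1}\sigma(g)\vb{v} = \langle \vb{v},\vb{\beta}\rangle$, so the bond actions cancel in the pairing while invariance of the two cores lets you insert them for free; this gives that the contraction of two $G$-invariant cores is again a $G$-invariant tensor whose legs inherit the remaining representations, and hence the merged object can serve as a core in the smaller network, which is what makes your induction over internal edges (legitimate by order-independence of contraction) go through. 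Your closing remark about the main difficulty being bookkeeping of primal versus dual legs is also accurate --- that matching is precisely what the paper builds into its definition of the TTN cores and into \cref{def:def_invariance}, and it is the only hypothesis without which the lemma is false.
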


Moreover, a $G$-invariant tensor decomposed as a tree tensor network (which includes TTNs) with $G$-invariant cores has the minimal bond dimension achievable by any tree tensor network decomposition with the same tree architecture \cite{sukhwinder2013}. 
The problem of finding a tree tensor network representing a $G$-invariant tensor is thus reduced to finding a tree tensor network with $G$-invariant cores. We call such tensor networks \textit{$G$-invariant tensor networks}. 

We describe an approach for the case of $G$-invariant tensors from \cref{def:def_invariance}, even though for TTNs only \cref{prop_invariance} would suffice.
It follows from \cref{def:def_invariance,eqn_multilinear_multiplication} that a $G$-invariant tensor satisfies the \textit{joint eigenproblem}
\begin{align}\label{eqn_joint_eigenproblem}
\left( \rho_{1}^{*}(g) \otimes\dots\otimes \rho_k^{*}(g) \otimes \rho_{k+1}(g) \otimes\dots\otimes \rho_d(g) \right)(\tensor{T}) = \tensor{T}
\end{align}
for all group elements $g \in G$ simultaneously. 
If the finite group $G$ is generated by the elements $g_1, \dots, g_s$, i.e., $G=\langle g_1, \dots, g_s \rangle$, then it suffices that \cref{eqn_joint_eigenproblem} holds for all generators $g_i$ to imply that it holds for all $g\in G$; see \cref{sec_sub_groupstructure}.
This entails that $G$-invariant tensors live in a linear subspace (called a \textit{subrepresentation} \cite{FH2004}). 
The approach of Finzi, Welling, and Wilson \cite{finzi2021practical} consists of computing an orthonormal basis of this subrepresentation and is briefly described in \cref{sec:rel_work} below. Here, we describe an alternative, more efficient approach for constructing such a basis.

For normal representations, we can proceed as follows.
Consider the simplest case of a cyclic group $G=\langle g_1 \rangle$ for which we want to find a basis of the solution space $\X$ of the following eigenproblem:
\begin{equation}\label{eq:probcyclic}
(\U^{1}_1\otimes \dots  \otimes\U^{d}_1)\X = \X,
\end{equation}
where the matrices $\U^{i}_1 \in \RR^{n_i \times n_i}$ are normal representation matrices for the generator $g_1$ on $\VS{W}_i$ or $\VS{W}_i^*$, i.e., $\U^{i}_1 = \rho_i^*(g_1) = \rho_i^{-\top}(g_1)$ if $i=1,\ldots,k$ and $\U^{i}_1 = \rho_i(g_1)$ if $i=k+1,\ldots,d$. Since the representation matrices are normal, they are \textit{unitarily} diagonalizable as $\U_1^{i} = \V^{i}\L^{i}(\V^{i})^H$ where $\L^{i}$ is diagonal (with all eigenvalues on the complex unit circle) and $\V^{i}$ a unitary matrix (so $(\V^{i})^H \V^{i} = \Id = \V^{i} (\V^{i})^H$); recall that superscripts are indices. Consequently, because of the tensor product, we have
\begin{equation}\label{eq:cond_cycl}
	(\V^{1} \otimes \dots \otimes \V^{d})
	(\L^{1} \otimes \dots  \otimes \L^{d})
	(\V^{1} \otimes \dots \otimes \V^{d})^H\X = \X.
\end{equation}
The solutions of \cref{eq:probcyclic} are the tensor products of eigenvectors for which the product of corresponding eigenvalues equals $1$.
Let $\{ (j_{q}^{1},\dots,j_{q}^{d}) \}_{q=1}^p$ be all indices such that 
\begin{equation}\label{eqn_prodone}
\Lambda^{1}_{j_{q}^{1},j_{q}^{1}}\cdots\Lambda^{d}_{j_{q}^{d},j_{q}^{d}} = 1, \text{ and let }  \matr{V}_*^{i} := \begin{bmatrix} \vb{v}_{j_{1}^{i}}^{i} & \cdots & \vb{v}_{j_{p}^{i}}^{i} \end{bmatrix} \in \RR^{n_i \times p},
\end{equation}
where $\vb{v}_{j}^{i}$ is the $j$th column of $\matr{V}^{i}$.
Then, the solutions of \cref{eq:probcyclic} can be put as columns into the following $n_{1}\cdots n_d \times p$ matrix:
\[
\matr{V}^{1}_* \odot \dots\odot \matr{V}^{d}_* :=
 \begin{bmatrix} \vb{v}_{j_{1}^{1}}^{1}\otimes \dots \otimes \vb{v}_{j_{1}^{d}}^{d} & \cdots & \vb{v}_{j_{p}^{1}}^{1}\otimes \dots \otimes \vb{v}_{j_{p}^{d}}^{d} \end{bmatrix},
\]
where $\odot$ is called the (columnwise) Khatri--Rao product. Note that $\matr{V}^{1}_* \odot\dots\odot \matr{V}^{d}_*$ contains a subset of the columns of $\matr{V}^{1}\otimes\dots\otimes\matr{V}^{d}$, from which it follows that the columns of $\matr{V}^{1}_* \odot\dots\odot \matr{V}^{d}_*$ form an orthonormal (in the Hermitian inner product) basis of the solution space of \cref{eq:probcyclic}.

For a group $G = \langle g_1, \dots, g_s \rangle$ with multiple generators the problem we have to solve is a joint eigenproblem
\begin{align}\label{eqn_orig_problem}
(\U^{1}_{1} \otimes \dots \otimes \U^{d}_{1})\X= \X,\quad \ldots,\quad
(\U^{1}_{s} \otimes \dots \otimes \U^{d}_{s})\X=\X.
\end{align}
where the matrix $\U^{i}_j \in \RR^{n_i \times n_i}$ is a normal representation matrix for the generator $g_j$ on $\VS{W}_j$, as before.
From the cyclic case, we know that $\X$ must live in the space spanned by the columns of $\matr{V}_*^{1}\odot\dots\odot\matr{V}_*^{d}$. That is, we can \emph{restrict to this subspace} and impose that $\X = (\V^{1}_* \odot \dots \odot \V^{d}_*)\Q$. Plugging this into the other equations yields
\begin{equation}\label{eq:partstelsel}
\bigl( (\U^{1}_{j}\V^{1}_*) \odot \dots \odot (\U^{d}_{j}\V^{d}_*) \bigr)\Q\ 
= (\V^{1}_* \odot \dots \odot \V^{d}_*)\Q, \quad j=1,\ldots,s.
\end{equation}
For each $j$ individually, this is a generalized rectangular eigenproblem \cite{10.1093/imamat/20.4.443} that can be solved as a regular eigenvalue problem by multiplying the left-hand side with the pseudo-inverse of the matrix on the right. Since the right matrix has orthogonal columns (with respect to the Hermitian inner product), its pseudo-inverse is the conjugate transpose. Hence, the system \cref{eq:partstelsel} is equivalent to the $p \times p$ eigenproblem
\begin{equation}\label{eq:systemeq}
\bigl( ( (\V_*^{1})^H\U_{j}^{1}\V^{1}_*) \circledast \cdots \circledast ((\V_*^{d})^H\U_{j}^{d}\V^{d}_*) \bigr)\Q = \Q, \quad j=1,\ldots,s;
\end{equation}
herein, we exploited the basic property of the Khatri--Rao product that $(A \odot B)^H (C \odot D) = (A^H C) \circledast (B^H D)$, where $\circledast$ is the Hadamard or elementwise product. This first reduction step from \cref{eqn_orig_problem} to \cref{eq:systemeq} is especially interesting computationally when $p$ is much smaller than $N=n_1 \cdots n_d$, as we will see below.

The matrices $((\V_*^{1})^H\U^{1}_{j}\V^{1}_* \circledast \dots \circledast (\V_*^{d})^H\U^{d}_{j}\V^{d}_*)$ are usually not normal and therefore not unitarily diagonalizable. Repeating the subspace restriction step is therefore, unfortunately, not possible.
However, solving the non-normal eigensystem \cref{eq:systemeq} is vastly simplified by the next result.\footnote{Representation theorists may note the connection to the averaging endomorphism \cite[Section 2.2]{FH2004} for representations, where the rightmost real eigenvalue can only be $\lambda_1=\dots=\lambda_s=1$.}

\begin{proposition}\label{prop_reduction}
Let $\B_i \in \CC^{m \times m}$, $i=1,\ldots,s$, be normal matrices whose rightmost eigenvalues are real and positive:
\[
 (\lambda_i, \vb{w}_i) := \arg \max_{\substack{(\lambda,\vb{w})\in\CC\times\mathbb{S}^n\\ B_i \vb{w} = \lambda \vb{w}}} \mathfrak{R}\left( \vb{w}^H \B_i \vb{w} \right) \in \RR_+\times\mathbb{S}^n;
\]
herein, $\mathbb{S}^{n} = \{ \vb{w}\in\CC^n \mid \Vert\vb{w}\Vert=1 \}$ and $\mathfrak{R}(a)$ denotes the real part of $a\in\CC$.
Let $\U \in \CC^{m \times n}$ be a matrix with orthonormal columns in the Hermitian inner product, and let $\A_i = \U^H \B_i \U \in \CC^{n \times n}$. 

The vector $\vb{v} \in \mathbb{S}^n$ solves the joint eigenvector problem
\[
 \A_1 \vb{v} = \lambda_1 \vb{v},\quad
 \A_2 \vb{v} = \lambda_2 \vb{v}, \quad 
 \ldots, \quad 
 \A_s \vb{v} = \lambda_s \vb{v}
\] if and only if $(x_1 \A_1 + \dots + x_s \A_s) \vb{v} = \vb{v}$ with $x_i = \frac{1}{s \lambda_i}$.
\end{proposition}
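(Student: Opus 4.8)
The plan is to prove the two directions separately: the forward implication is a one-line substitution, while the reverse implication---where a single scalar equation must recover all $s$ eigenvector equations---is the real content and rests on a numerical-range argument exploiting normality.

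For the forward direction, I would simply assume $\A_i\vb{v}=\lambda_i\vb{v}$ for every $i$ and compute $(x_1\A_1+\dots+x_s\A_s)\vb{v}=(\sum_i x_i\lambda_i)\vb{v}$; the choice $x_i=\frac{1}{s\lambda_i}$ makes $\sum_i x_i\lambda_i=\frac{1}{s}\sum_i 1=1$, so the weighted sum fixes $\vb{v}$. Nothing beyond the definition of the $x_i$ is needed here.

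The reverse direction is where I would concentrate. The key preliminary fact is that for a normal matrix $\B_i$ the Hermitian part $H_i:=\frac12(\B_i+\B_i^H)$ is simultaneously diagonalizable with $\B_i$ and has eigenvalues $\mathfrak{R}(\mu)$ as $\mu$ ranges over the spectrum of $\B_i$; hence its largest eigenvalue is precisely the rightmost eigenvalue $\lambda_i$, and for every unit vector $\vb{u}$ one has $\mathfrak{R}(\vb{u}^H\B_i\vb{u})=\vb{u}^H H_i\vb{u}\le\lambda_i$, with equality exactly when $\vb{u}$ lies in the top eigenspace of $H_i$. Because $\lambda_i$ is assumed real, positive, and the \emph{unique} eigenvalue attaining the maximal real part, this top eigenspace coincides with the $\lambda_i$-eigenspace of $\B_i$, so equality forces $\B_i\vb{u}=\lambda_i\vb{u}$. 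Granting this, I would assume $(\sum_i x_i\A_i)\vb{v}=\vb{v}$ with $\|\vb{v}\|=1$; since $\U$ has orthonormal columns, $\vb{w}:=\U\vb{v}$ is again a unit vector and $\vb{v}^H\A_i\vb{v}=\vb{w}^H\B_i\vb{w}$. Pairing the hypothesis with $\vb{v}$ and taking real parts gives $\sum_i x_i\,\mathfrak{R}(\vb{v}^H\A_i\vb{v})=1$. Each term is at most $x_i\lambda_i$ by the preliminary bound (note $x_i>0$ since $\lambda_i>0$), while $\sum_i x_i\lambda_i=1$; saturation of this inequality, together with $x_i>0$, forces termwise equality $\mathfrak{R}(\vb{v}^H\A_i\vb{v})=\lambda_i$ for every $i$. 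The equality case then yields $\B_i\vb{w}=\lambda_i\vb{w}$, and projecting back with $\U^H$ (using $\U^H\U=\Id$) gives $\A_i\vb{v}=\U^H\B_i\U\vb{v}=\lambda_i\U^H\vb{w}=\lambda_i\vb{v}$, as required.

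The main obstacle is the equality case of the numerical-range bound, namely deducing that a vector achieving $\mathfrak{R}(\vb{w}^H\B_i\vb{w})=\lambda_i$ is a genuine $\lambda_i$-eigenvector of $\B_i$. This is exactly where normality and the hypothesis that $\lambda_i$ is the rightmost (uniquely maximal real part) eigenvalue are indispensable: if a second eigenvalue shared the maximal real part, a non-eigenvector $\vb{w}$ could saturate the bound and the reverse implication would break down. I would therefore be careful to state and use this uniqueness explicitly, which is automatic in the intended application where each $\lambda_i=1$ and the remaining eigenvalues are roots of unity with strictly smaller real part.
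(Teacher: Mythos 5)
Your proof is correct and takes essentially the same route as the paper's: the forward direction is the identical substitution, and the reverse direction is the same Rayleigh-quotient saturation argument---pair the hypothesis with $\vb{v}$, take real parts, bound each $\mathfrak{R}\left((\U\vb{v})^H \B_i (\U\vb{v})\right)$ by $\lambda_i$ using normality, and exploit $x_i>0$ to force termwise equality and hence the eigenvector property, finishing by projecting back with $\U^H$. The only difference is in justification of the key bound: you derive it and its equality case explicitly from the Hermitian part $\tfrac{1}{2}(\B_i+\B_i^H)$, whereas the paper cites the fact that the field of values of a normal matrix is the convex hull of its eigenvalues; your explicit treatment of the equality case, including the role of the rightmost eigenvalue being the unique eigenvalue of maximal real part (which the paper's hypothesis guarantees but its proof uses silently), is if anything slightly more careful than the paper's.
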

\begin{proof}
If $\vb{v}$ is a solution of the joint eigenproblem, then taking a linear combination of the system with the coefficients $x_i=\frac{1}{s\lambda_i}$ yields
\begin{align*}
x_1 \A_1 \vb{v} + \cdots + x_s \A_s \vb{v} 
= x_1 \lambda_1 \vb{v} + \cdots + x_s \lambda_s \vb{v} 
=\vb{v}.
\end{align*}

Conversely, because $x_i\in\RR$, we have
\begin{align*}
1 = \vb{v}^H \left( x_1 \A_1 + \cdots + x_s \A_s \right)\vb{v}
= \sum_{i=1}^s x_i \mathfrak{R}\left( \vb{v}^H \A_i \vb{v} \right)
= \sum_{i=1}^s x_i \mathfrak{R}\left( (\U \vb{v})^H \B_i (\U \vb{v}) \right).
\end{align*}
Since $\B_i$ is a normal matrix, its \textit{numerical range} or \textit{field of values} is the convex hull of its eigenvalues \cite[Chapter 1]{HJ1991}. Consequently, 
\begin{align*}
 \lambda_i 
 = \max_{\substack{(\lambda,\vb{w})\in\CC\times\mathbb{S}^n\\ \B_i \vb{w} 
 = \lambda \vb{w}}} \mathfrak{R}\left( \vb{w}^H \B_i \vb{w} \right) 
 = \max_{\vb{w}\in\mathbb{S}^n} \mathfrak{R}\left( \vb{w}^H \B_i \vb{w} \right).
\end{align*}
Combining the two foregoing equations, we obtain
\[
 1 
 = \sum_{i=1}^s x_i \mathfrak{R}\left( (\U \vb{v})^H \B_i (\U \vb{v}) \right)
 \le \sum_{i=1}^s x_i \max_{\vb{w}\in\mathbb{S}^n} \mathfrak{R}\left( \vb{w}^H \B_i \vb{w} \right) 
 = x_1 \lambda_1 + \cdots + x_s \lambda_s 
 = 1.
\]
Hence, the inequality is an equality, and subtracting both sides yields
\[
 \sum_{i=1}^s x_i \left( \lambda_i - \mathfrak{R}\left( (\U\vb{v})^H \B_i (\U\vb{v}) \right) \right) = 0.
\]
Since $x_i > 0$ this equation can hold only if $\lambda_i = \mathfrak{R}\left( (\U\vb{v})^H \B_i (\U\vb{v}) \right)$ for all $i=1,\ldots,s$. Since $\Vert \U \vb{v} \Vert = \Vert\vb{v}\Vert = 1$, we conclude that
\(
 (\lambda_i, \U \vb{v})
\)
is a solution of the eigenproblem associated to $\B_i$, for all $i$. That is,
\[
 \B_i (\U \vb{v}) = \lambda_i (\U \vb{v}), \quad i=1,\ldots,s.
\]
Multiplying both sides of the equality by $\U^H$ concludes the proof.
\end{proof}

We can apply this result to the normal matrices $\B_j = \U_j^{1}\otimes\cdots\otimes \U_j^{d}$ and matrix $\U = \V_*^{1}\odot\cdots\odot \V_*^{d}$ with orthonormal columns.
It follows that the solutions of the system \cref{eq:systemeq} are the solutions of the \textit{single}, averaged standard eigenproblem
\[
 \frac{1}{s} \sum_{j=1}^s \bigl( ( (\V_*^{1})^H\U_{j}^{1}\V^{1}_*) \circledast \cdots \circledast ( (\V_*^{d})^H \U_{j}^{d}\V^{d}_*) \bigr)\Q = \Q.
\]

\begin{algorithm}[t]
	\caption{Construct a basis of $G$-invariant tensors}
	\label{alg:main}
	\begin{algorithmic}[1]
		\REQUIRE{Normal representation matrices $\U^{1}_j, \ldots, \U^{d}_j$ of the generator $g_j$, for $j=1,\ldots,s$.}
		\STATE{For $i=1,\ldots,d$, compute the eigendecomposition $\U^{i}_{1} = \V^{i}\L^{i}(\V^{i})^H$.}
		
		\STATE{For $i=1,\ldots,d$, compute $\V_*^{i}$ as in \cref{eqn_prodone}.}
		
		\STATE{Compute $\A = \frac{1}{s} \sum_{j=1}^s ( (\V_*^{1})^H\U^{1}_{j}\V^{1}_*) \circledast \dots \circledast ( (\V_*^{d})^H \U^{d}_{j}\V^{d}_*)$.}
		
		\STATE{Compute a Schur decomposition $\A = \V \T \V^{H}$ and extract an orthonormal basis $\Q \in \CC^{p \times r}$ (so $\Q^H Q = \Id$) for the eigenspace corresponding to eigenvalue $1$.}
		
		\ENSURE{The orthonormal basis $(\V^{1}_* \odot \dots \odot  \V^{d}_*)\Q$}
	\end{algorithmic}
\end{algorithm}

We summarize the above discussion as an algorithm for finding a basis of the $G$-invariant tensors as \cref{alg:main}.
It leaves some space for heuristic optimizations. For example, the choice of which generator $g_j$ should be processed first (we assumed $g_1$ above) has a serious impact on the computational cost. Indeed, the number of indices $p$ in step 2 of \cref{alg:main} determines the size of the reduced eigenproblem that is solved in step $4$; it is $p \times p$. Hence, it is interesting to choose the first generator as the one that minimizes $p$.
Another observation is that the representations are often the same when some or all of the vector spaces $\VS{W}_i$ coincide, as in our numerical experiments in \cref{sec:experiments}. This ensures that some calculations can be recycled. We did not implement these optimizations. 

There are also numerical issues relevant to \cref{alg:main}: how to decide when an eigenvalue is numerically equal to one or not. In our implementation, any eigenvalue $\lambda$ for which $\vert \lambda - 1 \vert < 10^{-6}$ is treated as an eigenvalue $1$ for the purpose of \cref{alg:main}.

The resulting algorithm has an asymptotic time complexity of order
\begin{displaymath}
\underbrace{(n_{1}^3 + \dots + n_d^3)}_{\text{step 1}} + 
\underbrace{N}_{\text{step 2}} + 
\underbrace{2(n_{1}^2 + \dots + n_d^2)ps + d p^2 s}_{\text{step 3}} + 
\underbrace{p^3}_{\text{step 4}} .
\end{displaymath}
Herein, $s$ is the number of generators, $n_i$ is the dimension of the $i$th vector space, $N = n_{1} \cdots n_d = \dim\VS{T}_{k,d}$ is the tensor dimension, $d$ is the tensor order, and $p$ is the number of combinations for which the eigenvalue is $1$.

For a cyclic group, the complexity reduces to $O((n_{1}^3 + \dots + n_d^3) + N)$, this is a strict improvement over the method from \cite{finzi2021practical}. For groups with multiple generators a theoretical comparison is more difficult. In theory, if every eigenvalue of all generators is $1$, then $p = N$, so the asymptotic time complexity of \cref{alg:main} is worse than the state-of-the-art method from \cite{finzi2021practical}. However, in practice, $p$ will never be as large as $N$, and our numerical experiments in \cref{sec:experiments} indicate orders of magnitude improvements over \cite{finzi2021practical} in practical settings. We discuss the value of $p$ in more depth in \cref{sec:pval}.

\section{Orthogonal representations and a good first generator}\label{sec:pval}
The crux of \cref{alg:main} is that oftentimes the amount of initial combinations of eigenvalues equal to $1$, i.e., $p$ in \cref{alg:main}, can be kept relatively small. In this section, we investigate $p$ and introduce some heuristics to keep it small.

The key observation is that \cref{alg:main} applies for any set of generators of $G$. In particular, there is no assumption of minimality. Hence, if $G = \langle g_2, \ldots, g_{s+1} \rangle$, then we are free to add any additional group element $g_1$, so that $G = \langle g_1, \ldots, g_{s+1} \rangle.$ We can thus assume without loss of generality that $g_1 \in G$ in \cref{sec_equivariant_tt} is any desirable group element, albeit at the cost of a minor impact on the computational performance as the number of generators increases by $1$. The trick is then to choose $g_1$ in a way that minimizes $p$.

Representation theory studies representations of groups in different dimensions. 
Here, we focus on a few basic, well-known groups of order $n$ that have natural $n$-dimensional orthogonal representations:\footnote{It is important to note that groups of order $n$ can also have representations in different dimensions than $n$. For example, the cyclic group $C_n = \{g_1, \ldots, g_n\}$ has a natural $1$-dimensional representation for all $n \in \NN$: $\U_{j} = e^{2\pi \imath\frac{j}{n}}$.} the \textit{cyclic group} $C_n$, the \textit{dihedral group} of symmetries of a regular $n$-gon $D_n$, the \textit{symmetric group} of permutations on $n$ symbols $S_n$, and the \textit{dicyclic group} $Q_n$ (which includes as a special case the group of \textit{quaternions}). The dicyclic group exists only for $n$ divisible by $4$. We recall these well-known representations next.

Consider the permutation matrices
\begin{align}\label{eqn_Uc_and_Ur}
\U_{c} = \begin{pmatrix}
0 & 1 & 0 & \cdots & 0 \\
0 & 0 & 1 &\ddots & \vdots \\
\vdots& \vdots & \ddots &   \ddots &0\\
0 & 0 & \cdots& 0 & 1\\
1 & 0 &  \cdots & 0 & 0
\end{pmatrix} \quad\text{ and }\quad
\U_{r} = \begin{pmatrix}
0 & \dots & 0 & 1 \\
\vdots& \iddots &   \iddots & 0\\
0 &   \iddots & \iddots &\vdots\\
1 & 0 &  \dots  & 0
\end{pmatrix}.
\end{align}
Applied to a vector $\vb{v}$, $U_c$ circularly shifts the elements of $\vb{v}$ upwards, and $U_r$ inverts the order of the elements of $\vb{v}$. Let $\U_{s}^{ij} = 
\Id - (\vb{e}_i - \vb{e}_j)(\vb{e}_i - \vb{e}_j)^\top$ be the orthogonal matrix that, when applied to $\vb{v}$ swaps the elements at positions $i$ and $j$ of $\vb{v}$. Then, 
\begin{itemize}
 \item[$\blacktriangleright$] $C_n = \langle g_c \rangle$ is generated by a single generator $g_c$ whose $n$-dimensional representation is the cyclic shift matrix $\rho(g_c)= \U_c$.
 \item[$\blacktriangleright$] $D_n = \langle g_c, g_r \rangle$ is generated by $g_c$ and the additional generator $g_r$ whose representation is the reverser matrix $\rho(g_r) = \U_r$. 
 \item[$\blacktriangleright$] $S_n = \langle g_c, g_s^{12} \rangle$ is generated by $g_c$ and the additional generator $g_s^{12}$ whose representation is the swapping matrix $\rho(g_s^{12}) = \U_s^{12}$.
 \item[$\blacktriangleright$] $Q_n = \langle g_c, g_x \rangle$ is generated by $g_c$ and an additional generator $g_x$ that satisfies the relations $g_x^2 = g_c^\frac{n}{2}$ and $g_c g_x = g_x g_c^{-1}$.
\end{itemize}

All four of these groups have $g_c$, a cyclic shift, as generator. We observe that $\U_c$, the representation of $g_c$ in dimension $n$, is a \textit{circulant matrix}. The eigenvalues of such matrices are known; see, for example, \cite{davis1979circulant}.

\begin{lemma}[Davis \cite{davis1979circulant}]
 The eigenvalues of $\U_c$ are $\lambda_j = e^{-2\pi \imath \frac{1}{n} j}$.
\end{lemma}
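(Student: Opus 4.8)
The plan is to read off the eigenvalues directly by solving the eigenvector equation, exploiting the purely combinatorial action of $\U_c$. First I would note that $\U_c$ implements a cyclic upward shift: inspecting its rows shows that for any $\vb{v}=(v_1,\ldots,v_n)^\top$ one has $(\U_c\vb{v})_i = v_{i+1}$ for $1\le i\le n-1$ and $(\U_c\vb{v})_n = v_1$, where indices are read modulo $n$.

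Next, for a candidate eigenpair $(\lambda,\vb{v})$ with $\U_c\vb{v}=\lambda\vb{v}$ and $\vb{v}\neq 0$, I would write this componentwise as the recurrence $v_{i+1}=\lambda v_i$ for $1 \le i \le n-1$ together with the wrap-around relation $v_1 = \lambda v_n$. Unrolling the recurrence gives $v_i = \lambda^{i-1} v_1$, and substituting this into the wrap-around relation forces $\lambda^n = 1$ (the eigenvector is nonzero, so $v_1\neq 0$). Hence every eigenvalue is an $n$th root of unity, that is, of the form $e^{-2\pi\imath j/n}$ for some $j\in\{0,1,\ldots,n-1\}$. Conversely, I would verify that each of the $n$ distinct values $\lambda_j = e^{-2\pi\imath j/n}$ is genuinely attained, by checking that $\vb{v}_j = (1,\lambda_j,\lambda_j^2,\ldots,\lambda_j^{n-1})^\top$ satisfies $\U_c\vb{v}_j = \lambda_j\vb{v}_j$. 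Since these $n$ values are pairwise distinct and $\U_c$ is $n\times n$, they are exactly all of its eigenvalues, which proves the claim.

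There is no real obstacle here, as this is a classical fact about circulant matrices; the only point deserving care is the wrap-around relation $v_1=\lambda v_n$, since it is precisely this (rather than mere bidiagonal shift structure) that pins the spectrum to the roots of unity instead of to $\lambda=0$. A slicker alternative that sidesteps the recurrence altogether is to observe that $\U_c$ is the permutation matrix of a single $n$-cycle, whence $\U_c^n = \Id$; this immediately yields $\lambda^n=1$ for every eigenvalue, and the distinctness of the $n$ roots of unity shows that each occurs with multiplicity one.
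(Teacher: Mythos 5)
Your main argument is correct and complete. Note that the paper itself offers no proof of this lemma: it is imported from Davis's monograph, where it arises as a special case of the general theorem that every circulant matrix is diagonalized by the discrete Fourier transform, with eigenvalues given by the DFT of its first row. Your route is the elementary, self-contained version of that fact for the particular circulant $\U_c$: the recurrence $v_{i+1}=\lambda v_i$ together with the wrap-around relation $v_1=\lambda v_n$ forces $\lambda^n=1$ (and you correctly observe that $v_1=0$ would propagate to $\vb{v}=0$), while your explicit eigenvectors $(1,\lambda_j,\lambda_j^2,\ldots,\lambda_j^{n-1})^\top$ are precisely the columns of the DFT matrix, so you rediscover the Fourier diagonalization in this special case. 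The citation buys generality (it handles all circulants at once, which the paper does not need); your proof buys self-containedness at the cost of working only for the shift matrix. One caveat on your ``slicker alternative'': the relation $\U_c^n=\Id$ only tells you that the minimal polynomial divides $x^n-1$, hence that $\U_c$ is diagonalizable with eigenvalues \emph{among} the $n$th roots of unity; it does not by itself show that every root occurs (the identity matrix satisfies $\Id^n=\Id$ as well). To pin down the full spectrum you still need the explicit eigenvectors from your main argument, or a computation of the characteristic polynomial $\det(x\Id-\U_c)=x^n-1$. Since your main argument supplies the eigenvectors, this is a remark about the aside, not a gap in the proof.
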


\begin{corollary}
Assume that we choose $g_c$ as first generator of $C_n$, $D_n$, $S_n$, or $Q_n$. When $g_c$ is represented in dimension $n$ by $\U_c$, then \cref{eq:cond_cycl} has a solution space of dimension $p = n^{d-1}$.
\end{corollary}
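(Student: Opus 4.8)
The plan is to compute $p$ directly, since the columns of $\matr{V}^1_* \odot \dots \odot \matr{V}^d_*$ form an orthonormal basis of the solution space of \cref{eq:cond_cycl} (as established just before \cref{eqn_orig_problem}), so that the dimension of this space is exactly the number of eigenvalue-index tuples $(j^1,\dots,j^d)$ from \cref{eqn_prodone} whose eigenvalue product equals $1$. The whole statement therefore reduces to a counting problem.

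First I would observe that for each of the four groups $C_n$, $D_n$, $S_n$, $Q_n$, the chosen first generator $g_c$ is represented in dimension $n$ by the cyclic shift matrix $\U_c$, which is a permutation matrix and hence orthogonal. By the earlier remark that $\rho^* = \rho$ for orthogonal representations, the matrix occupying each of the $d$ tensor factors of \cref{eq:probcyclic}---whether a dual factor ($i \le k$, where $\U^i_1 = \rho_i^{-\top}(g_c)$) or a primal factor ($i > k$, where $\U^i_1 = \rho_i(g_c)$)---is the \emph{same} matrix $\U_c$. Consequently every factor contributes the identical set of eigenvalues.

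Next I would invoke the preceding lemma of Davis: the eigenvalues of $\U_c$ are the $n$ \emph{distinct} roots of unity $\lambda_j = e^{-2\pi\imath j/n}$, $j = 0,\dots,n-1$. Distinctness means each eigenvalue index labels a single eigenvector, so the defining condition $\Lambda^1_{j^1,j^1}\cdots\Lambda^d_{j^d,j^d} = 1$ from \cref{eqn_prodone} becomes $e^{-2\pi\imath(j^1+\dots+j^d)/n} = 1$, equivalently the single linear congruence $j^1 + \dots + j^d \equiv 0 \pmod{n}$. The counting is then elementary: the tuples $(j^1,\dots,j^d)\in\{0,\dots,n-1\}^d$ satisfying this congruence are in bijection with arbitrary choices of the first $d-1$ coordinates, because $j^d \equiv -(j^1+\dots+j^{d-1}) \pmod{n}$ is thereafter uniquely pinned down in $\{0,\dots,n-1\}$. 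Hence $p = n^{d-1}$.

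The only mild obstacle is the first step: recognizing that the orthogonality of $\U_c$ forces the dual and primal factors to coincide, so that all $d$ factors carry the same multiset of eigenvalues. Once this is secured, the product-equals-one condition collapses to a single modular linear equation and the count $n^{d-1}$ is immediate; no further estimate or structural argument about $D_n$, $S_n$, or $Q_n$ is needed, since $p$ depends only on the first generator $g_c$.
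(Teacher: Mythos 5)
Your proposal is correct and follows essentially the same route as the paper: both reduce the count to the congruence $j^1 + \cdots + j^d \equiv 0 \pmod{n}$ via Davis's lemma and observe that the first $d-1$ indices may be chosen freely while the last is then uniquely determined, giving $p = n^{d-1}$. Your additional opening step---noting that orthogonality of $\U_c$ forces the dual factors $\rho_i^{-\top}(g_c)$ to equal $\U_c$ itself, so all $d$ factors share the same spectrum---is a careful spelling-out of something the paper leaves implicit (it had earlier noted $\rho^* = \rho$ for orthogonal representations), not a different argument.
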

\begin{proof}
 Let $1 \le i_1, i_2, \ldots, i_{d-1} \le n$ be arbitrary. Then,
 \[
  \lambda_{i_1} \lambda_{i_2} \cdots \lambda_{i_{d-1}} = e^{-2\pi\imath\frac{1}{n}(i_1+i_2+\cdots+i_{d-1})} = e^{-2\pi\imath\frac{1}{n}(i_1 + i_2 + \cdots + i_{d-1}\mod n)},
 \]
where the second equality is due to the fact that the group of $n$th roots of unity is cyclic of order $n$. It follows there is precisely one $1 \le i_d \le n$ such that $i_1 + i_2 + \dots + i_{d-1} + i_d = 0 \mod n$. For this choice of $i_d$, $\lambda_{i_1}\cdots\lambda_{i_d}=1$, while any other choice leads to a different $n$th root of unity. 
Hence, there are exactly $p=n^{d-1}$ products of eigenvalues equal to $1$ in $\Lambda^{1}\otimes\cdots\otimes\Lambda^{d}$ in \cref{eq:cond_cycl}, concluding the proof.
\end{proof}

Since the maximum value of $p$ is $\dim\VS{T}_{k,d} = n^{d} = N$, the previous result causes a significant reduction of a factor of $\frac{1}{n}$ in the size of the system \cref{eq:partstelsel} relative to \cref{eqn_orig_problem}. Choosing a different generator than $g_c$ can result in a much poorer reduction. For example, because $\U_s^{12}=\mathrm{diag}\bigl(\left(\begin{smallmatrix}0&1\\1&0\end{smallmatrix}\right), \Id \bigr)$, we see that it has $n-2$ eigenvalues equal to $1$. In this case, \cref{eq:cond_cycl} will have at least $(n-2)^d \approx n^d = N$ solutions, causing no appreciable reduction in size.

A natural way to construct new discrete groups is from the product of smaller discrete groups.  
For the product group $G \times H$ of $G = \langle g_1, \dots, g_{s_1}\rangle$ and $H = \langle h_1, \dots, h_{s_2}\rangle$, two representations can easily be found from the representations of $G$ and $H$.
Let $\rho_G$ be a $d_1$-dimensional representation of $G$ and $\rho_H$ a $d_2$-dimensional representation of $H$. Then, we can either take the \textit{direct sum}, $\rho_{G\times H}(g,h) = \rho_G(g) \oplus \rho_H(h) = \mathrm{diag}(\rho_G(g), \rho_H(h))$ as a $(d_1+d_2)$-dimensional representation, or take the \textit{tensor product}, $\rho_{G\times H}(g,h) = \rho_G(g) \otimes \rho_H(h)$, as a $d_1d_2$-dimensional representation.

For the product group, $G \times H = \langle (g_1, e_h), \dots, (g_{s_1}, e_h), (e_g, h_1), \dots, (e_g, h_{s_2})\rangle$ is a natural generating set, where $e_g$ is the neutral element of $G$ and likewise for $e_h\in H$. 
However, for our purpose, it can be useful to add the element $(g_1, h_1)$ to the generating set and use this as the first generator. This is because, for the direct sum case, in $\rho_G(g_1)\oplus\rho_H(e_H)$, the $\rho_H(e_H)$ part is the identity matrix, hence adding the maximum number of eigenvalues equal to $1$ in the direct sum. By contrast, the number of eigenvalues $1$ in $\rho_G(g_1)\oplus\rho_H(h_1)$ is the sum of those in $\rho_G(g_1)$ and $\rho_H(h_1)$. In the tensor product case, $(g_1, h_1)$, whose representation is $\rho_G(g_1) \otimes \rho_H(h_1)$, is an evident choice as first generator. The number of eigenvalues equal to $1$ will be at least equal to the product of the number of eigenvalues $1$ of $\rho_G(g_1)$ and $\rho_H(h_1)$.

Finally, the different representations of groups and their corresponding differences in the number of eigenvalues equal to $1$ is illustrated next.
\begin{example}
The octahedral group $O_h$ is the symmetry group of the octahedron and has three generators, commonly denoted by $a, b$ and $c$. A representation in $\RR^3$ is
{\small\begin{align*}
	\rho(a) = 
	\begin{pmatrix}
	-1 & 0 & 0\\
	0 & -1 & 0 \\
	0 & 0 & -1 \\
	\end{pmatrix},\;
	\rho(b) =
	\begin{pmatrix}
	0 & 0 & 1\\
	1 & 0 & 0 \\
	0 & 1 & 0 \\
	\end{pmatrix},\;
	\rho(c) =	
	\begin{pmatrix}
	-1 & 0 & 0\\
	0 & -1 & 0 \\
	0 & 0 & 1 \\
	\end{pmatrix}.
\end{align*}}
Other representations can be found from the fact that $O_h \simeq S_4 \times C_2$. For example, a direct sum representation in 5 dimensions is
{\small\begin{align*}
\rho_\oplus(a) &= 
\begin{pmatrix}
0 & 1 & 0 & 0 & 0\\
1 & 0 & 0 & 0 & 0\\
0 & 0 & 1 & 0 & 0\\
0 & 0 & 0 & 1 & 0 \\
0 & 0 & 0 & 0 & 1 \\
\end{pmatrix},\;
\rho_\oplus(b) =
\begin{pmatrix}
1 & 0 & 0 & 0 & 0\\
0 & 0 & 0 & 1 & 0\\
0 & 1 & 0 & 0 & 0\\
0 & 0 & 1 & 0 & 0 \\
0 & 0 & 0 & 0 & 1 \\
\end{pmatrix},\;
\rho_\oplus(c) =	
\begin{pmatrix}
1 & 0 & 0 & 0 & 0\\
0 & 1 & 0 & 0 & 0\\
0 & 0 & 1 & 0 & 0\\
0 & 0 & 0 & 1 & 0 \\
0 & 0 & 0 & 0 & -1 \\
\end{pmatrix},
\end{align*}}%
and a tensor product representation in 8 dimensions is
{\small\begin{align*}
\rho_\otimes(a) = 
\begin{pmatrix}
0 & 1 & 0 & 0 \\
1 & 0 & 0 & 0 \\
0 & 0 & 1 & 0\\
0 & 0 & 0 & 1 \\
\end{pmatrix} \otimes \Id_2,\;
\rho_\otimes(b) = 
\begin{pmatrix}
1 & 0 & 0 & 0 \\
0 & 0 & 0 & 1 \\
0 & 1 & 0 & 0 \\
0 & 0 & 1 & 0 \\
\end{pmatrix} \otimes \Id_2,\;
\rho_\otimes(c) =
\Id_4 \otimes 
\begin{pmatrix}
0 & 1 \\
1 & 0 \\
\end{pmatrix}.
\end{align*}}

\Cref{tab:ps} lists the number of products of eigenvalues equal to $1$, i.e., $p$ in \cref{alg:main}, with the different representations and different choices of first generator. As can be seen, the choice of representation and choice of first generator have a major impact on the value of $p$.

\begin{table}[t]\footnotesize
\caption{\footnotesize The number of eigenvalues equal to $1$ in $\U^{\otimes d} = \U \otimes \cdots \otimes \U$ for increasing $d$ and different representations $\U$ of the octahedral group in dimensions $d$. This is the value of $p$ in \cref{alg:main}.} \label{tab:ps}
\begin{center}
\begin{tabular}{cccccccccc}
\toprule
$d$ & \multicolumn{9}{c}{$\U$} \\
\cmidrule{2-10}
 & $\rho(a)$ & $\rho(b)$ & $\rho(c)$ & $\rho_\oplus(a)$ & $\rho_\oplus(b)$ & $\rho_\oplus(c)$ & $\rho_\otimes(a)$ & $\rho_\otimes(b)$ & $\rho_\otimes(c)$ \\  
$2$ & 9 & 3 & 5 &   17 & 5 & 17 & 40 & 24  & 32\\
$3$ & 0 & 9 & 13 & 76 & 47 & 76 & 288 & 176 & 256 \\
$4$ & 81 & 27 & 41 & 353 & 219 & 353 & 2176 & 1376 & 2048 \\
\bottomrule
\end{tabular}
\end{center}
\end{table}
\end{example} 

\section{Related work}\label{sec:rel_work} 
The most studied approach for imposing group-invariance on tensors in the machine learning literature consists of theoretically computing a decomposition of the representation of the group generated by $(M_g^1,\ldots,M_g^{k+1})$ from \cref{sec_sub_groupstructure} into \textit{irreducible subrepresentations} \cite[Chapter 1]{FH2004}; see for example \cite{TSKYLKR2018,Kondor2018,FWFW2020,singh2010tensor}.  
The subrepresentation containing all group-invariant tensors is then the one corresponding to the \textit{trivial subrepresentation} \cite[Chapter 2]{FH2004}.
While there is a nice averaging characterization \cite[Section 2.2]{FH2004} of the trivial subrepresentation, in general, it is not trivial to find a basis for it. Similarly, theoretically decomposing an arbitrary (tensor product) representation into irreducible subrepresentations is considered a hard problem, called the Clebsch--Gordan problem; see \cite[Section 25.3]{FH2004} for more details and some available techniques that could be used.

A completely different, numerical approach was proposed by Finzi, Welling, and Wilson \cite{finzi2021practical}. 
They developed a numerical algorithm to construct invariant linear maps and tensors for any group. Our approach can be viewed as a more elaborate, yet more efficient version of their approach.
The approach in \cite{finzi2021practical} consists of rewriting \cref{eqn_joint_eigenproblem} as 
\begin{equation}\label{eq:const}
0 = 
\underbrace{\begin{pmatrix}
\U^{1}_1 \otimes \dots \otimes \U^{d}_1 - \Id \\
\vdots\\
\U^{1}_s \otimes \dots \otimes \U^{d}_s - \Id \\
\end{pmatrix}}_\matr{C} \tensor{T},
\quad\text{where }
\matr{U}_{j}^{i} = \begin{cases}
                  \rho_i^*(g_j) & \text{if } 1 \le i \le k, \\
                  \rho_i(g_j) & \text{if } k+1 \le i \le d, 
                 \end{cases}
\end{equation}
$\Id$ is the identity and $G = \langle g_1, \dots, g_s \rangle$. The valid tensors $\tensor{T}$ are thus the elements in the kernel of the \emph{constraint matrix} $\matr{C}$. This defines a standard problem in numerical linear algebra: Compute a basis of the kernel. After extracting a basis, every $G$-invariant tensor $\tensor{T}\in\VS{T}_{k,d}$ is a linear combination of these basis elements.

Since \cref{alg:main} is intended as an alternative to Finzi, Welling, and Wilson's method \cite{finzi2021practical}, it is instructive to compare their theoretic running times.
Let $N = n_{1} \cdots n_d$ be the dimension of $\VS{T}_{k,d}$. Then, $\matr{C}$ above is an $sN\times N$-matrix. Computing an orthonormal basis of the kernel of this matrix directly with the singular value decomposition would require $\mathcal{O}( s N^{3} )$ operations and $\mathcal{O}(sN^2)$ memory. The work \cite{finzi2021practical} attempts to circumvent this cost by employing a (nonstandard) Krylov method similar to gradient descent. In their approach only the action of $\matr{C}$ on a tensor is required, which essentially amounts to applying $s$ multilinear multiplications to $\tensor{T}$ and then subtracting $\tensor{T}$ from each of them. This avoids constructing the constraint matrix $\matr{C}$ explicitly. Some algorithms for multilinear multiplication or tensor-times-matrix (TTM) product are known \cite{tensorlab,JSSv087i10,LBPSV2015,BK2006}. The usual algorithm consists of $d$ successive matrix multiplications and requires $\mathcal{O}(N(n_{1}+\dots+n_d))$ operations, so that $\matr{C}$ could be applied with cost $\mathcal{O}(sN(n_{1}+\dots+n_d))$. Unless the spectrum of $\matr{C}$ is exceptionally well-clustered, one should expect that the number of iterations for any Krylov method to reliably extract a vector in the kernel is an appreciable fraction $\alpha > 0$ of the size of the matrix \cite{VVM2015}. Hence, the practical time complexity to find one vector in the kernel is expected to be $\mathcal{O}(\alpha s (n_{1}+\dots+n_d) N^2)$, and for $r$ vectors the cost is multiplied by $r$.  
Ultimately, this approach reduces the complexity from cubic in $N$ for a straightforward singular value decomposition to quadratic in $N=\dim\VS{T}_{k,d}$. 

In addition to the high computational cost, another practical drawback of the method from \cite{finzi2021practical} is that an initial guess about the dimension $r$ of the kernel of $\matr{C}$ needs to be made. If the guess is too small, the procedure is repeated with a doubled rank $r$ until the rank is big enough \cite{finzi2021practical}.  

\section{Experimental results}\label{sec:experiments}
In the next subsection, we compare the performance of \cref{alg:main} to the method introduced in \cite{finzi2021practical}, whose key steps were described in \cref{sec:rel_work}. In \cref{sec_sub_parity}, we compare the performance of group-invariant TTNs to their non-invariant, but data-augmented counterparts in a simple supervised learning task. Finally, in \cref{sec_suplearn_tt_expers}, we apply group-invariant TTNs as supervised learning model to a DNA transcription factor binding classification problem and compare them against a state-of-the-art group-invariant \textit{neural} network from \cite{mallet2021reversecomplement}.

We implemented all employed methods in Python $3.8$ using Tensorflow $2.6$ and SciPy and executed them on $10$ cores of one Xeon Gold $6140$ CPU ($18$ cores, clockspeed of each core $2.3$GHz, $24.75$MB L3 cache) with $180$GB RAM from the tier-2 Genius cluster of the Vlaams Supercomputer Centrum. Our implementation is available at \url{https://gitlab.kuleuven.be/u0134300/group-invariant-tensor-trains}.

\subsection{Group-invariant tensor basis construction}\label{sec:basisconst}
\Cref{alg:main} is compared to the state-of-the-art ``baseline'' method of Finzi, Welling, and Wilson \cite[Algorithm 1]{finzi2021practical} for group-invariant tensor basis construction for three groups over a range of orders and dimensions. 
For this baseline method, we used a compressed sparse row sparse matrix data structure for the representation matrices $\U_j^{i}$, while our method operates on dense matrices.
We also implemented a ``naive'' method that computes the nullspace of the constraint matrix $C$ in \cref{eq:const}---constructed explicitly with Kronecker products of dense matrices---using a dense singular value decomposition.

We compare the timings of \cref{alg:main} to the naive and baseline methods for cyclic groups $C_n = \langle g_c \rangle$, dihedral groups $D_n = \langle g_c, g_r \rangle$, and symmetric groups $S_n = \langle g_c, g_s^{12} \rangle$; the generators and representations we used were explained in \cref{sec:pval}. In these cases, $g_1 = g_c$ is selected as first generator in \cref{alg:main}. We additionally investigate the alternative set of generators $S_n = \langle g_s^{12}, g_s^{13}, \ldots, g_s^{1n} \rangle$ of the symmetric group $S_n$. Here we use $g_1=g_s^{12}$ as first generator.
 In all experiments, the dimensions and representations are the same for every vector space: $\U_j^{i} = \rho(g_j)$ for all $i$ and $j$. We run each of the three methods (naive, baseline, and \cref{alg:main}) for each group $G=C_n,D_n,S_n$, for tensors of order $d=2,3,4$ with $k=0$ (the output space is $\RR$), and $n=10,20,30,40,50$. In each configuration, a time limit of $1$ hour is enforced, and a main memory limit of $180$GB. The results are shown in \cref{fig:timecomp}. All methods, when they were not prematurely terminated, successfully extracted an orthonormal basis of the correct dimension, as determined by the naive method.
 
\begin{figure}[tb]
\caption{\footnotesize Construction times for group-invariant tensor bases.}
\label{fig:timecomp}

\centering 
\includegraphics[width=.95\textwidth]{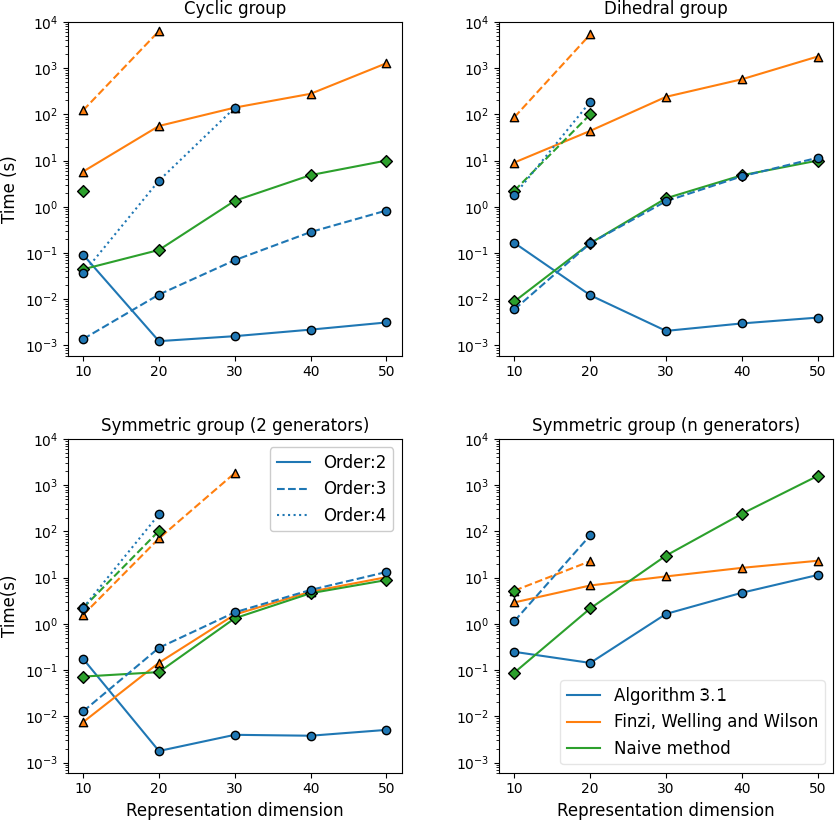}
\end{figure}

In all tested configurations, our method outperforms the baseline method and the naive method, often with multiple orders of magnitude, provided that the ``good'' set of generators $\{g_c, g_s^{12}\}$ is chosen for $S_n$.
The naive method outperforms the baseline in some small-scale cases as well because of the higher throughput realized by dense linear algebra kernels; Finzi, Welling, and Wilson's algorithm \cite{finzi2021practical}, on the other hand, was specifically developed for large-scale cases where the naive method is not feasible anymore due to excessive memory consumption (also visible in \cref{fig:timecomp}).

The bottom row of \cref{fig:timecomp} highlights the importance of selecting the right generators. While the naive method is competitive with \cite[Algorithm 1]{finzi2021practical} when viewing $S_n = \langle g_c, g_s^{12} \rangle$ (\cref{fig:timecomp}, bottom left), the former has the worst asymptotic scaling behaviour when using the alternative set of generators $\{g_s^{12}, g_s^{13},\ldots,g_s^{1n}\}$ instead (\cref{fig:timecomp}, bottom right). Similarly, \cref{alg:main} is also orders of magnitude slower with the latter ``bad'' selection of generators. Indeed, in this case, the initial filtering step can only reduce to a joint eigendecomposition problem of dimension $p=(n-2)^d \approx n^d$, whereas using the generators $\{ g_c, g_s^{12}\}$ reduces the eigenproblem to $p=n^{d-1}$. This means that with the two generators the cost of solving the eigenproblem, which dominates the time complexity, is $O(n^{3d -3})$, versus $O(n^{3d})$ when using the $n$ generators. This performance gap is consistent with the bottom row of \cref{fig:timecomp}.

\subsection{An invariant TTN for parity classification} \label{sec_sub_parity}
As a first toy example, we consider the problem of classifying bit-strings according to their parity, which has a group invariance.
The group action on a bit-string consists of replacing a $0$ by a $1$ and vice versa. For strings of odd length, if the parity is 1, this means there is an odd amount of ones and an even amount of zeros, and after the group action there is an even amount of ones. Thus, this problem is $G$-invariant for the parity group $G=(\ZZ_2, +_2)$. For strings of even length, if the parity is $1$, there is an odd amount of ones and an odd amount of zeros. Hence, after the group action, the parity remains the same, so the problem is $G$-invariant with the trivial representation on the output.

As local feature map $\phi$ that preserves $G$-invariance we use the one-hot encoding of the two group elements. That is $\phi(0)=\left(\begin{smallmatrix}1\\0\end{smallmatrix}\right)$ and $\phi(1)=\left(\begin{smallmatrix}0\\1\end{smallmatrix}\right)$. For the representation on the input vector spaces $\VS{W}_i = \RR^2$ and the output vector space $\VS{W}=\RR^2$ we use the representation defined in \cref{eq:rep_parity}, while extensions thereof are used as representations on the bond vector spaces $\VS{B}_i = \VS{B}_i^* = \RR^b$, where $b$ is the bond dimension. For even-length strings, a network that is $G$-invariant can be achieved by using the trivial representation on the output vector space. 

To train a model for classifying bitstrings of length $d=2\ell+1$, we construct a tensor train with $d$ cores as in \cref{sec_tt_explain}. The first and the last cores are order-$2$ tensors in $\RR^2 \otimes \RR^b$ and $\RR^b \otimes \RR^2$ respectively, where $b$ denotes the bond dimension. The output vector space is positioned at the $\ell$th core, which lives in $\RR^2 \otimes \RR^b \otimes \RR^b \otimes \RR^2$, the other tensors are order-$3$ tensors in $\RR^2 \otimes \RR^b \otimes \RR^b$.  As baseline model we use aforementioned TTN without imposing invariance, while $G$-invariance is imposed on the model referred to below as the $\ZZ_2$-invariant model. 

The training setup is as follows. 
\Cref{alg:main} constructs an orthonormal basis for a $G$-invariant core tensor. A general $G$-invariant core tensor $\tensor{T}$ is a linear combination of these basis vectors. Hence, optimization over the subspace of $G$-invariant core tensors can be reduced to unconstrained optimization over the coefficients with respect to this orthonormal basis.
The variational parameters of the complete $G$-invariant TTN are thus these sets of coefficients. 
Above two TTNs were trained with Tensorflow for fixed bit-string length over $100$ epochs with Adam optimization using automatic differentiation to compute the gradient. A softmax activation function is applied to normalize the outputs of both models. The core tensors in the baseline model are initialized such that the order-2 tensors are $\Id_{2\times b} := [\Id_{2\times 2} \; 0_{2\times (b-2)}]$, the elements in the output core are all $1$ and the other cores are initialized such that the matrix ``slices'' obtained by fixing the incoming bond dimension are $\Id$; that is, for a tensor $\tensor{T}$, for all $1\leq i \leq \dim b$, $T_{*,*,i} = \frac{1}{\sqrt{2}}\Id_{2\times b}$ or $T_{*,i,*} = \frac{1}{\sqrt{2}} \Id_{2\times b}$, depending on whether the core is respectively to the left or right of the output core. To all cores identically and independently distributed Gaussian noise with average $0$ and standard deviation $10^{-3}$ is added.
As training data, a random subset consisting of $5\%$ of all possible bitstrings were taken. \textit{No validation set was employed}. Instead we simply minimize the binary cross-entropy loss on the training data, \textit{facilitating overfitting on the data}. The goal is precisely to illustrate that imposing the $\ZZ_2$-invariance is a useful inductive bias and can prevent some of the overfitting.
The model performance was evaluated on the bitstrings that were not used for training, comprising our test set. No hyperparameter tuning was performed. We trained the models $100$ times with a different training set and different initialization of the TTN parameters.

\begin{figure}
\caption{\footnotesize Parity classification results (both on training and validation set) on $100$ runs after training for $100$ epochs.}
\label{fig:parity_comp}

\centering
\includegraphics[width=\textwidth]{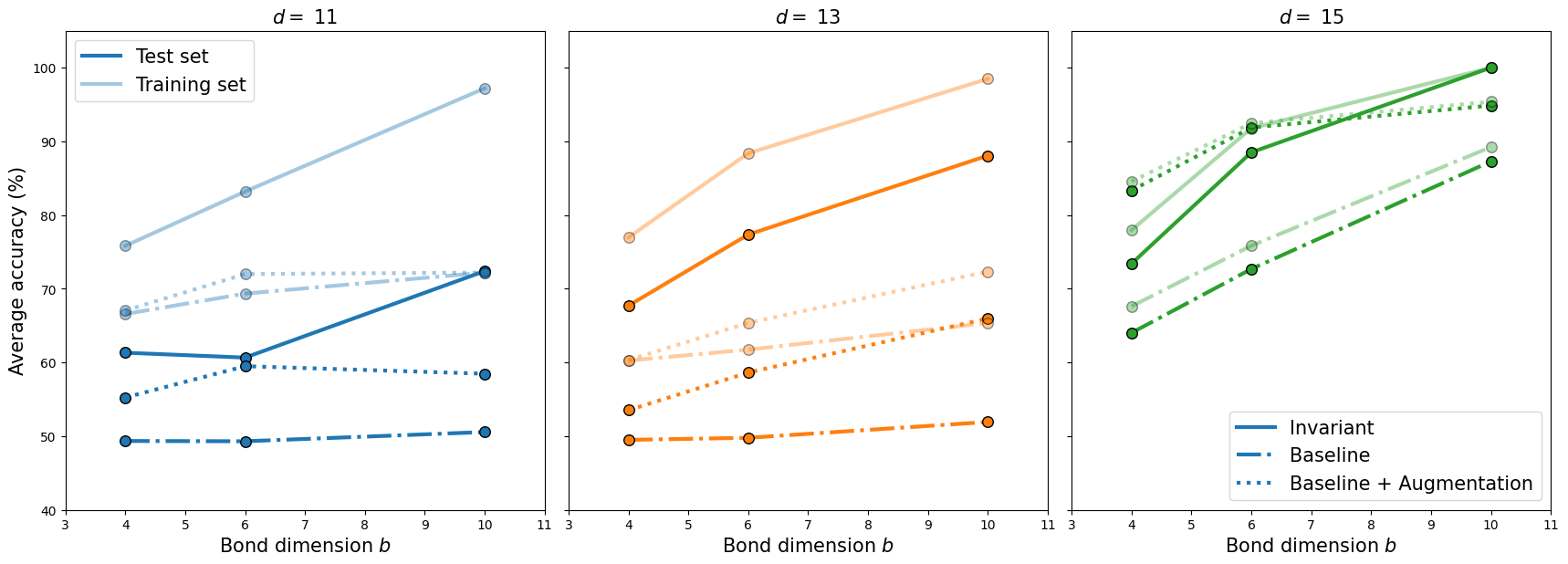}
\end{figure}

Results for odd-length bit-strings for different string lengths and bond dimensions are shown in \cref{fig:parity_comp}. 
The plots show the average accuracy over the 100 runs on both test and training set. We compared $\ZZ_2$-invariant TTN models, regular TTN models, and the regular TTN model trained with complete \textit{data-augmentation}; that is, after creating a training set, the $(\ZZ_2,+_2)$-transformed training set is added to the initial training set, thereby doubling the amount of training data.

Apart from the general conclusion regarding TTNs that increasing the bond dimension also increases the model performance, it is clear that the $\ZZ_2$-invariant model outperforms the baseline model in every case. Eventually, the invariant model outperforms the data-augmented model as well. Where this is not the case, the bond dimension is probably too small, limiting the model's expressiveness. The performance of all models appears to increase as the problem size increases, but this attributed to the increasing training dataset size, which is $0.05\cdot2^d$.

Apart from an increased performance, the invariant TTN has less free parameters (as determined by Tensorflow's \texttt{count\_params} function); this is shown in \cref{tab:model_size}. On average, in this example, the number of free, trainable parameters was reduced by about a factor $2$ by using $\ZZ_2$-invariant networks. Naturally, this depends entirely on the group, the tensor order, and dimensions. Memory consumption decreases as well, because there are less parameters and the basis tensors need only be stored a single time, and thus the reduction increases as the model grows.

\begin{table}[t] \footnotesize
	\caption{\footnotesize Number of variational parameters and total storage requirements in number of floating-point values of the two models. The ratios between those quantities of the baseline and invariant model are also shown.} \label{tab:model_size}
	\begin{center}
		\begin{tabular}{c|lccccccccc}
			\toprule
			 & & \multicolumn{3}{c}{$d=11$} & \multicolumn{3}{c}{$d=13$} & \multicolumn{3}{c}{$d=15$}\\
			\cmidrule{3-11}
			& $b$ & 4 & 6 & 10 & 4 & 6 & 10 & 4 & 6 & 10 \\
			\midrule
			\multirow{3}{5em}{Trainable parameters (floats)} 
			& Baseline   & $338$ & $746$ & $2042$ & $402$ & $890$ & $2442$ & $466$ & $1034$ & $2842$ \\
			& Invariant  & $168$ & $372$ & $1020$ & $200$ & $444$ & $1220$ & $232$ & $516$ & $1420$ \\
			\cmidrule{2-11}
			& Ratio & $2.01$ & $2.01$ & $2.00$ & $2.01$ & $2.00$ & $2.00$ & $2.01$ & $2.00$ & $2.00$ \\
			\midrule
			\multirow{3}{5em}{Storage (floats)} 
			& Baseline  & $338$ & $746$ & $2042$ & $402$ & $890$ & $2442$ & $466$ & $1034$ & $2842$ \\
			& Invariant & $280$ & $612$ & $1660$ & $312$ & $684$ & $1860$ & $344$ & $756$ & $2060$ \\
			\cmidrule{2-11}
			& Ratio & 1.21 & 1.22 &  1.23 & 1.29 & 1.30 & 1.31 & 1.35 & 1.37 & 1.38 \\
			\bottomrule	
		\end{tabular}
	\end{center}
\end{table}

\subsection{Invariant TTNs for transcription factor binding} \label{sec_suplearn_tt_expers}\label{sec:RC}
We illustrate the power of the introduced invariant TTNs by applying them to a binary classification task on DNA sequences. The task is to predict whether a \textit{transcription factor} (a protein) will bind to a DNA sequence. The data set of Zhou, Shrikumar and Kundaje \cite{Zhou2020.11.04.368803} contains three transcription factors (MAX, CTCF and SPI1) along with $10\,000$ DNA strands per transcription factor. Each DNA strand in this dataset is composed of a sequence of $1\,000$ canonical base pairs, i.e., adenine (A), cytosine (C), guanine (G), and thymine (T). The authors of \cite{Zhou2020.11.04.368803} randomly partitioned the data, allocating $40\%$ to the training set, $30\%$ to the test set, and $30\%$ to the validation set.

When used for predicting genome-wide regulatory signals such as transcription factor binding, DNA strands exhibit an interesting invariance called \textit{reverse compliment (RC) invariance} \cite{Zhou2020.11.04.368803}. The complement invariance arises from the nucleobase pairings in the double helix (A pairs up with T, and G pairs up with C) and the way the strands are read. The reverse invariance is geometric in nature: if a transcription factor binds to a DNA strand, then it also binds on the same strand rotated by $\pi$ radians by rotating the protein likewise. For example, AGTGC is equivalent to GCACT and if a transcription factor binds to one of them it will bind to the other as well. 

RC invariance has both a local (the complement invariance) and a global part (the reverse invariance). This paper was concerned with local invariance. However, this particular global invariance can also be handled with our invariant TTNs. 

The architecture of our TTN is as follows. As nucleobase feature maps $\phi$, binary vectors of length $4$ that represent a one-hot encoding of the molecules A, G, T, and C in this respective order are chosen. 
A TTN with $d=1001$ cores is constructed as a classification model. The output is located on the core in the middle, which does not have an input vector space and thus lives in $\RR^b\otimes \RR^b\otimes\RR^2$. The output space is binary because a strand binds to the protein or it does not.
The local part is described by the cyclic group $(\langle g \rangle, \circ)$, where the action is such that $g(\text{A})=\text{C}$, $g(\text{C})=\text{A}$, $g(\text{G})=\text{T}$ and $g(\text{T})=\text{G}$; note that it is isomorphic to the parity group $G=(\ZZ_2, +_2)$. This naturally induces a $4$-dimensional representation $\U_g = \U_r$ that can be used on the input vector spaces $\VS{W}_i = \RR^4$, where $\U_r$ is as in \cref{eqn_Uc_and_Ur}. 
For the bond vector spaces $\VS{B}_i$, we take as representation of $G$ the reverser matrix $\U_r$ in dimension $\dim \VS{B}_i$. To impose RC-invariance on the TTN we do not require the building blocks to be $G$-invariant as before, but we require the cores on the right of the middle core to be related to their corresponding cores on the left side; that is, for strands of length $d$ the core at position $m$ is related to the core at position $d-m+1$ in the TTN in a way such that transposing the bond indices of the core has the same result as applying the invariance action for the $(\ZZ_2,+_2)$-part on that core. Only the output core (at position $\ell=\frac{d+1}{2}=501$) is $(\ZZ_2,+_2)$-invariant, with an extra transposition for the reverse part of the invariance. This is summarized by the following conditions, using the TTN notation from \cref{sec_tt_explain}:
\begin{align*} 
	\tensor{A}_{ijl}^{\ell} &= \left((\U_r, \U_r, \Id)\cdot\tensor{A}^{\ell}\right)_{jil}, \\
	\tensor{A}^{m}_{ijk} &= \left((\U_r, \U_r, \U_r)\cdot\vb{\mathcal{A}}^{d-m+1}\right)_{ikj}, \quad m=1,2,\ldots,\ell-1, \\
	\A^{d}_{ij}  &= \left((\U_r, \U_r)\cdot\A^{1}\right)_{ji};
\end{align*} 
herein, we used that the reverser matrix is symmetric $U_g = U_r = U_r^\top$.
An RC-action on this TTN then swaps all cores at locations $m$ with the core at at position $d-m+1$ while simultaneously applying the $(\ZZ_2,+_2)$ action on all input vector spaces and transposing the bond vector spaces.
For odd-length strands we can make the middle core have an input vector space $\VS{W}^*$ as well, making the core live in $\RR^2 \otimes \RR^b\otimes\RR^b \otimes \RR^2$. An example of an RC-invariant TTN together with the conditions on the building blocks is given in \cref{fig:rc_network}; it can be verified that under the given conditions the TTN is invariant. 

Training was done for $100$ epochs and a batch size of $100$ examples, with binary cross-entropy loss and $\ell_2$-regularization on the variational parameters. The model is trained with stochastic gradient descent with Nesterov momentum with a fraction of $0.2$, with different learning rates (see \cref{tab:rc_params}). The TTN output is again normalized with a \textit{softmax} activation function. After every epoch, the \textit{area under the ROC-curve} (AUROC) on the validation set is evaluated and the weights of the best performing model are saved and used for evaluating the model performance on the test set. Optimal hyperparameters vary depending on the prediction task and are given in \cref{tab:rc_params}; they were found by manually trying out a few values. 

\begin{table}[tb]
\footnotesize
		\caption{\footnotesize Optimal training parameters used for evaluating model.}\label{tab:rc_params}
		\begin{center}
			\begin{tabular}{ccccc} \toprule
				\bf Task & \bf Bond dimension & \bf Regularization & \bf Epochs & \bf Learning rate \\ \midrule
				MAX & 3 & 0.005 & 100 & 0.001 \\
				CTCF & 8 & 0.005 & 100 & 0.01 \\
				SPI1 & 8 & 0.003 & 100 & 0.01 \\ \bottomrule
			\end{tabular}
		\end{center}
\end{table}

Average results over 5 runs of our model together with the results from the state-of-the-art invariant neural network model introduced by Mallet and Vert \cite{mallet2021reversecomplement} are given in \cref{tab:res_rc}. Their model takes an additional translation invariance into account. The performance of our TTN model is competitive with the state of the art for the CTCF and SPI1 dataset, while outperforming it for the MAX dataset. However, we note that for about one in five runs, our TTN model starts in a local minimum and does not get to a loss value that is much better than the initial value. These cases are left out of the averages and variances, and may have be avoided by using optimization algorithms that are better suited for training TTNs such as the one introduced by Stoudenmire and Schwab in \cite{stoudenmire2017supervised} or by better learning-hyperparameter finetuning.

\begin{figure}[tb]
	\caption{\footnotesize RC-invariant TTN architecture and building block constraints in Penrose graphical notation \cite{Bridgeman_2017}. The direction of the arrows indicates whether the vector space is a dual space or not. The constraints are invariance constraints as introduced earlier with an extra transposition of the bond indices to account for the reverse operation of the symmetry. To arrive at a model that is RC-invariant, the trivial representation is taken on the output vector space.}
	\label{fig:rc_network}
	\centering
\begin{tikzpicture}[scale=0.75]
\draw[very thick,->,>=stealth] (3,1) -- ++(0,1.2);
\foreach \i in {0,1,2,3,4} {
\ifthenelse{\i = 2}{}{\draw[very thick,->,>=stealth] (1.5*\i,0) -- ++(0,0.6);}
\draw[very thick,fill=white,rounded corners] (1.5*\i-.5,0.6) rectangle ++(1,1);
\ifthenelse{\i < 2}{
\draw[very thick,->,>=stealth] (1.5*\i+0.5,1.1) -- ++(0.5,0);
}{}
\ifthenelse{\i > 2}{
\draw[very thick,->,>=stealth] (1.5*\i-0.5,1.1) -- ++(-0.5,0);
}
}
\node at (0,1.1) {$A^1$};
\node at (1.5,1.1) {$\tensor{A}^2$};
\node at (3,1.1) {$\tensor{A}^3$};
\node at (4.5,1.1) {$\tensor{A}^4$};
\node at (6,1.1) {$A^5$};
\node at (7.5,1.1) {with};
\begin{scope}[shift={(10,4.1)},scale=.75]
\begin{scope}[shift={(0,0)}]
\draw[very thick,fill=white,rounded corners] (-1.5,-0.5) rectangle ++(1,1);
\node at (-1,0) {$A^5$};
\draw[very thick,->,>=stealth] (-1.5,0) -- ++(-.6,0) node[above]{$j$};
\draw[very thick,->,>=stealth] (-1,-1.1) node[left]{$i$} -- ++(0,.6);
\end{scope}
\node at (1,0) {$=$};
\begin{scope}[shift={(2.5,0)}]
\draw[very thick,fill=white,rounded corners] (1.5,-0.5) rectangle ++(1,1);
\node at (2,0) {$A^1$};
\draw[very thick,->,>=stealth] (2,-1.5) -- ++(0,1);
\draw[very thick,->,>=stealth] (2,-2.5) node[left]{$i$} -- ++(0,0.6);
\node[draw,rectangle,thick,rounded corners,fill=black!8] at (2,-1.5) {$U_r$};
\draw[very thick,->,>=stealth] (2.5,0) -- ++(0.5,0);
\draw[very thick,->,>=stealth] (4.0,0) -- ++(0.5,0) node[above]{$j$}; 
\node[draw,rectangle,thick,rounded corners,fill=black!8] at (3.5,0) {$U_r$};
\end{scope}
\end{scope}
\begin{scope}[shift={(10,1.1)},scale=.75]
\begin{scope}[shift={(0,0)}]
\draw[very thick,fill=white,rounded corners] (-1.5,-0.5) rectangle ++(1,1);
\node at (-1,0) {$\tensor{A}^4$};
\draw[very thick,->,>=stealth] (-1.5,0) -- ++(-.6,0) node[above]{$j$};
\draw[very thick,->,>=stealth] (0.1,0) node[above]{$i$} -- ++(-.6,0);
\draw[very thick,->,>=stealth] (-1,-1.1) node[left]{$k$} -- ++(0,.6);
\end{scope}
\node at (1,0) {$=$};
\begin{scope}[shift={(2.5,0)}]
\draw[very thick,fill=white,rounded corners] (1.5,-0.5) rectangle ++(1,1);
\node at (2,0) {$\tensor{A}^2$};
\draw[very thick,->,>=stealth] (2,-1.5) -- ++(0,1);
\draw[very thick,->,>=stealth] (-0.5,0) node[above]{$i$} -- ++(0.5,0);
\draw[very thick,->,>=stealth] (1.0,0) -- ++(0.5,0); 
\node[draw,rectangle,thick,rounded corners,fill=black!8] at (0.5,0) {$U_r$};
\draw[very thick,->,>=stealth] (2,-2.5) node[left]{$k$} -- ++(0,0.6);
\node[draw,rectangle,thick,rounded corners,fill=black!8] at (2,-1.5) {$U_r$};
\draw[very thick,->,>=stealth] (2.5,0) -- ++(0.5,0);
\draw[very thick,->,>=stealth] (4.0,0) -- ++(0.5,0) node[above]{$j$}; 
\node[draw,rectangle,thick,rounded corners,fill=black!8] at (3.5,0) {$U_r$};
\end{scope}
\end{scope}
\begin{scope}[shift={(10,-2.4)},scale=.8]
\begin{scope}[shift={(0,0)}]
\draw[very thick,fill=white,rounded corners] (-1.5,-0.5) rectangle ++(1,1);
\node at (-1,0) {$\tensor{A}^3$};
\draw[very thick,->,>=stealth] (-2.1,0) node[above]{$j$} -- ++(.6,0);
\draw[very thick,->,>=stealth] (0.1,0) node[above]{$i$} -- ++(-.6,0);
\draw[very thick,->,>=stealth] (-1,0.5) -- ++(0,.6) node[left]{$k$};
\end{scope}
\node at (1,0) {$=$};
\begin{scope}[shift={(2.5,0)}]
\draw[very thick,fill=white,rounded corners] (1.5,-0.5) rectangle ++(1,1);
\node at (2,0) {$\tensor{A}^3$};
\draw[very thick,->,>=stealth] (2,0.5) -- ++(0,0.6) node[left]{$k$};
\draw[very thick,->,>=stealth] (-0.5,0) node[above]{$i$} -- ++(0.5,0);
\draw[very thick,->,>=stealth] (1.0,0) -- ++(0.5,0); 
\node[draw,rectangle,thick,rounded corners,fill=black!8] at (0.5,0) {$U_r$};
\draw[very thick,->,>=stealth] (3.0,0) -- ++(-0.5,0);
\draw[very thick,->,>=stealth] (4.5,0) node[above]{$j$} -- ++(-0.5,0); 
\node[draw,rectangle,thick,rounded corners,fill=black!8] at (3.5,0) {$U_r$};
\end{scope}
\end{scope}
\end{tikzpicture}
\end{figure}
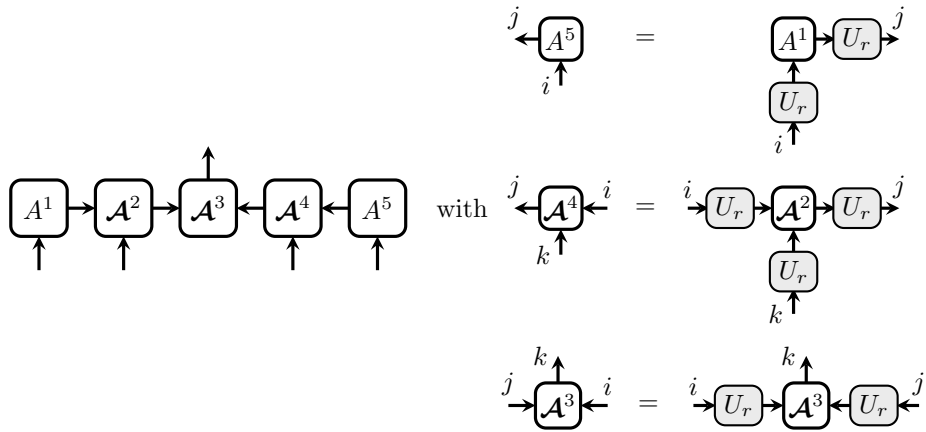

\begin{table}[tb]
\footnotesize
		\caption{\footnotesize Test results of RC-invariant TTNs and the benchmark results from Mallet and Vert \cite{mallet2021reversecomplement}.}\label{tab:res_rc}
		\begin{center}
			\begin{tabular}{cccc} 
			\toprule
				\bf Dataset & \bf Model & \bf AUROC & \bf Standard deviation \\ \midrule
				\multirow{2}{4em}{CTCF} & Ours & $94.10\%$ & $0.21\%$ \\
				& Benchmark & $\bm{98.84}\%$ & $0.056\%$ \\ \midrule
				\multirow{2}{4em}{SPI1} & Ours & $96.53\%$ & $0.030\%$ \\
				& Bechmark & $\bm{99.26}\%$ & $0.034\%$ \\ \midrule
				\multirow{2}{4em}{MAX} & Ours & $\bm{97.06}\%$ & $0.011\%$ \\
				& Benchmark & $92.80\%$ & $0.26\%$\\
				\bottomrule
			\end{tabular}
		\end{center}
\end{table}

\section{Conclusions}\label{sec:conclusions}
Group-invariant tensors arising from equivariant multilinear maps were introduced in \cref{sec:background}.
We presented a new method to construct $G$-invariant TTNs for arbitrary discrete groups $G$ with normal representations in \cref{sec:tnequiv}. The main ingredient is \cref{alg:main}, a basis construction method that scales better in practice than the state-of-the-art method  \cite[Algorithm 1]{finzi2021practical} for several common groups. It exploits an observation about real rightmost eigenvalues of a joint eigenproblem in \cref{prop_reduction}, allowing a reduction to a single standard eigenproblem. Crucial to the performance of \cref{alg:main} is the selection of the first generator of the group $G$. \Cref{sec:pval} suggests a good first generator for any mixed product of cyclic, dihedral, symmetric, and dicyclic groups. Our $G$-invariant TTNs were applied as supervised learning models for the prediction of transcription factor binding on DNA sequences in \cref{sec:experiments}. The group structure inherent in this problem (reverse complement invariance) is captured by the proposed invariant TTN, hereby introducing a powerful and application-supported inductive bias. The experiments show that TTNs can be competitive with state-of-the-art neural network models on a real world problem in terms of accuracy.


\end{document}